\documentclass[sigconf]{acmart}

\AtBeginDocument{%
  }

\copyrightyear{2026}
\acmYear{2026}
\setcopyright{cc}
\setcctype{by}
\acmConference[KDD 2026] {Proceedings of the 32nd ACM SIGKDD Conference on Knowledge Discovery and Data Mining V.2}{August 9--13, 2026}{Jeju Island, Republic of Korea.}
\acmBooktitle{Proceedings of the 32nd ACM SIGKDD Conference on Knowledge Discovery and Data Mining V.2 (KDD 2026), August 9--13, 2026, Jeju Island, Republic of Korea}
\acmISBN{979-8-4007-2259-2/2026/08}
\acmDOI{10.1145/3770855.3817789}

\usepackage{amsmath,amsfonts,bm}

\def\eqref#1{equation~\ref{#1}}
\def\1{\bm{1}}

\def\vh{{\bm{h}}}

\def\vv{{\bm{v}}}

\def\mD{{\bm{D}}}
\def\mE{{\bm{E}}}
\def\mF{{\bm{F}}}

\def\mH{{\bm{H}}}

\def\mK{{\bm{K}}}
\def\mL{{\bm{L}}}
\def\mM{{\bm{M}}}

\def\mT{{\bm{T}}}

\def\mW{{\bm{W}}}

\DeclareMathAlphabet{\mathsfit}{\encodingdefault}{\sfdefault}{m}{sl}
\SetMathAlphabet{\mathsfit}{bold}{\encodingdefault}{\sfdefault}{bx}{n}

\def\gD{{\mathcal{D}}}
\def\gE{{\mathcal{E}}}

\def\gG{{\mathcal{G}}}

\def\gS{{\mathcal{S}}}

\def\gV{{\mathcal{V}}}

\usepackage{soul}
\usepackage{url}
\usepackage[utf8]{inputenc}
\usepackage{amsmath}
\usepackage{amsthm}
\usepackage{booktabs}
\usepackage{algorithm}
\usepackage{algorithmic}
\usepackage[switch]{lineno}
\usepackage{enumitem}
\usepackage{caption}
\usepackage{graphicx}
\usepackage{natbib}
\usepackage{svg}
\usepackage{multirow}
\usepackage{lipsum}
\usepackage{balance}

\usepackage{subcaption}
\usepackage{cleveref}

\newcommand{\method}{AGDN}

\begin{document}

%%
%% The "title" command has an optional parameter,
%% allowing the author to define a "short title" to be used in page headers.
\title{AGDN: Learning to Solve Traveling Salesman Problem with Anisotropic Graph Diffusion Network}

%%
%% The "author" command and its associated commands are used to define
%% the authors and their affiliations.
%% Of note is the shared affiliation of the first two authors, and the
%% "authornote" and "authornotemark" commands
%% used to denote shared contribution to the research.

\author{Bolin Shen}
\authornote{This work was done while the author was at Singapore Management University.}
\affiliation{
  \institution{Florida State University}
  \city{Tallahassee}
  \country{United States}
}
\email{blshen@fsu.edu}

\author{Ziwei Huang}
\affiliation{
  \institution{Singapore Management University}
  \country{Singapore}
}
\email{ziweihuang@smu.edu.sg}

\author{Zhiguang Cao}
\affiliation{
  \institution{Singapore Management University}
  \country{Singapore}
}
\email{zhiguangcao@outlook.com}

\author{Yushun Dong}
\authornote{Corresponding author.}
\affiliation{
  \institution{Florida State University}
  \city{Tallahassee}
  \country{United States}
}
\email{yd24f@fsu.edu}

%%
%% By default, the full list of authors will be used in the page
%% headers. Often, this list is too long, and will overlap
%% other information printed in the page headers. This command allows
%% the author to define a more concise list
%% of authors' names for this purpose.
\renewcommand{\shortauthors}{Bolin Shen, Ziwei Huang, Zhiguang Cao, and Yushun Dong}

%%
%% The abstract is a short summary of the work to be presented in the
%% article.
\begin{abstract}

% The Traveling Salesman Problem (TSP) remains a fundamental challenge in combinatorial optimization with widespread applications. In recent years, despite the early explorations in solving TSP with graph learning methods, how to better leverage graph structure information for more effective learning remains under-explored. In this paper, we propose the Anisotropic Graph Diffusion Network (AGDN), a novel framework for solving TSP with Graph Neural Networks. Our work addresses two pressing challenges in this domain: (1) the lack of informative topological prior in fully connected TSP graphs and (2) losing connected nodes in the optimal solution after the commonly used graph sparsification. To handle these challenges, we introduce MixScore transition matrix that encodes both node similarity and pairwise distance information, along with an anisotropic graph diffusion mechanism that enables efficient multi-hop information propagation. Through extensive experiments on TSPs of various sizes and node distributions, the proposed framework AGDN demonstrates superior performance compared to existing alternatives while maintaining competitive computational efficiency. Moreover, AGDN shows strong generalization capabilities across different problem sizes and node distributions. Our code is now available at: \url{https://anonymous.4open.science/r/agdn}.

The Traveling Salesman Problem (TSP) is a cornerstone of combinatorial optimization and arises in many practical scenarios. Although graph‐based learning approaches have been explored for TSP, the question of how to exploit graph structure more effectively remains open. We present the Anisotropic Graph Diffusion Network (AGDN), a new Graph Neural Network framework designed to solve TSP. Our method tackles two central difficulties: (1) the lack of informative topological prior in fully connected TSP graphs, and (2) losing connected nodes in the optimal solution after the commonly used graph sparsification techniques. To overcome these issues, we construct a MixScore transition matrix that merges node similarity with pairwise distance, and we develop an anisotropic graph diffusion strategy that supports efficient information exchange across multiple hops. Comprehensive experiments spanning diverse instance sizes and node distributions show that AGDN consistently outperforms existing methods while keeping computation time competitive. Furthermore, AGDN generalizes well to problem sizes and distributions beyond those seen during training. The implementation is publicly available at: \url{https://github.com/LabRAI/AGDN}.

\end{abstract}

%%
%% The code below is generated by the tool at http://dl.acm.org/ccs.cfm.
%% Please copy and paste the code instead of the example below.
%%
\begin{CCSXML}
<ccs2012>
   <concept>
       <concept_id>10003752.10003809.10003635.10010037</concept_id>
       <concept_desc>Theory of computation~Shortest paths</concept_desc>
       <concept_significance>500</concept_significance>
       </concept>
   <concept>
       <concept_id>10010147.10010257.10010293.10010294</concept_id>
       <concept_desc>Computing methodologies~Neural networks</concept_desc>
       <concept_significance>300</concept_significance>
       </concept>
 </ccs2012>
\end{CCSXML}

\ccsdesc[500]{Theory of computation~Shortest paths}
\ccsdesc[300]{Computing methodologies~Neural networks}

%%
%% Keywords. The author(s) should pick words that accurately describe
%% the work being presented. Separate the keywords with commas.
\keywords{Traveling Salesman Problem, Graph Neural Networks}

%%
%% This command processes the author and affiliation and title
%% information and builds the first part of the formatted document.
\maketitle

\newcommand\kddavailabilityurl{https://doi.org/10.5281/zenodo.20496880}
\ifdefempty{\kddavailabilityurl}{}{
\begingroup\small\noindent\raggedright\textbf{Resource Availability:}\\
% please change the following context to include multiple artifacts if necessary, including data, models, code, etc.
The source code of this paper has been made publicly available at \url{\kddavailabilityurl}.
\endgroup
}

\section{Introduction}

% \noindent 
% \todo{First, 30+ references in total for THIS SECTION solely}\\

% \noindent 
% \todo{Second, it is illegal to directly use a name (e.g., sparsification, gated mechanism, etc.) without explaining in plain language about what it is first.}\\

% \noindent 
% \todo{Third, for anything in this paper, if you do not put a reason/justification there first, NO more words, NO more sentence, NO more actions. NOTHING should be added then.}\\

% \noindent 
% \todo{You should NOT avoid checking whether the subjective aligns with the verb or not.}\\

% \noindent 
% \todo{prompt: "Please pick out the non-daily words or phrases in the sentences below and list them for me:"} 

% \yd{Potential solutions: (1) xx, where/which/ ..., (2) xxx, i.e., xxx; (3) xxx (e.g., xxx); (4) xxx such as/including xxx; (5) xxx. For example/instance, xxx; (6) Here, xxx (use plain language to simply define something - and use the prompt to check AGAIN after you write it).}

% \yd{NOTICE: solutions should start from something that HAS been discussed.}

% \bs{TSP is like an application, it is difficulty to find application's application}

The Traveling Salesman Problem (TSP) is a well-known combinatorial optimization problem~\citep{korte2011cop} with widespread applications such as circuit design~\citep{kirkpatrick1983circuitdesign} and vehicle routing~\citep{clarke1964vrp}.
% in engineering~\citep{lenstra1975applications-1} (e.g. circuit design~\citep{kirkpatrick1983circuitdesign}, computer board connectivity~\citep{matai2010applications}), as well as important applications in vehicle routing~\citep{clarke1964vrp}, where it holds substantial financial and social benefits~\citep{malandraki1993benefit}.
The objective of a given TSP (referred to as a TSP instance)
% A specific TSP is equivalent to a TSP instance, and the objective of one TSP instance is finding 
is to find the shortest possible route that visits a specified set of cities once and returns to the starting point. Accordingly, the search space of TSP is often modeled as a fully connected graph (i.e., a TSP graph) spanning across different cities~\citep{joshi2019efficient, fu2021generalize}.
% , and the nodes , where each node represents a city. We refer to such a graph and each pair of neighboring nodes in the optimal solution of a TSP as the TSP graph and optimal nodes, respectively. 
% Numerous classic methods have already been proposed to solve the TSP, including exact and heuristic methods~\citep{applegate2009concorde, helsgaun2017lkh} often face challenges such as high computational resource requirements~\citep{ozden2017timeconsume} for large-scale instances and the need for manual effort~\citep{xin2021neurolkh, chen2024finetune}.
Numerous classic methods have already been proposed to handle TSP~\citep{applegate2009concorde, helsgaun2017lkh}. However, these methods typically face challenges such as high computational costs~\citep{ozden2017timeconsume} and heavy manual efforts~\citep{xin2021neurolkh, chen2024finetune}.

% such as exact and heuristic methods~\citep{applegate2009concorde, helsgaun2017lkh} often face challenges such as high computational resource requirements~\citep{ozden2017timeconsume} for large-scale instances and the need for manual effort~\citep{xin2021neurolkh, chen2024finetune}.

% with traditional approaches often relying on the Concorde solver\citep{applegate2009concorde} to obtain exact optimal solutions. Nonetheless, this method demands substantial computational resources, particularly for large-scale TSP instances, leading to an unaffordable time costs. Alternatively, the Lin-Kernighan Heuristic (LKH)\citep{helsgaun2017lkh} is a heuristic-based method that iteratively swap node sequences to search for solutions, known as the k-opt method. While this method offers a near-optimal solution, it requires expert knowledge to fine-tune parameters, such as the number of maximum iteration trials and the number of node swaps.

In recent years, learning-based methods~\citep{kool2018attention, kwon2020pomo, luo2023lehd, fang2024invit, huang2025rethinking, yi2026radar} have emerged as a promising alternative, where Graph Neural Networks (GNNs)~\citep{velivckovic2017gat,kipf2016gcn} have proven particularly effective~\citep{joshi2019efficient, qiu2022dimes, sun2023difusco, zhang2025adversarial, zhang2026hybrid} due to their natural alignment with TSP's topological search space.
% To overcome the limitations of traditional approaches, in recent years, learning-based methods~\citep{kool2018attention, kwon2020pomo, luo2023lehd, fang2024invit} have gained  popularity. Since the search space of a TSP can be naturally considered as a graph, Graph Neural Networks (GNNs)-based models have attracted an increasing research attention and achieved remarkable performance~\citep{joshi2019efficient, qiu2022dimes, sun2023difusco}. 
However, the volume of a TSP's search space grows exponential w.r.t. the total number of nodes. As a consequence, sparsifying the TSP graph is often necessary for the current learning methods to reduce the search space and improve efficiency~\citep{lischka2024less}.
% Handling such a large search space is challenging, making it essential to devise strategies to reduce the search space. Commonly, 
% sparsifying the TSP graph is often necessary ~\citep{lischka2024less}.
% such as KNN are employed for sparsification~\citep{lischka2024less}. 
Nevertheless, such a sparsification process may remove certain edges that exist in the underlying optimal solution. In these cases, while the nodes originally connected by these edges may still be higher-order neighbors, their relationship becomes much more difficult for the commonly used shallow GNNs to capture~\citep{kipf2016gcn, velivckovic2017gat}. Therefore, this sparsification mechanism widely used by the current methods can severely jeopardize the learning performance.
% some important edges, resulting in sparsified graphs with the optimal nodes not being connected. 
% This is because in these cases, while the nodes originally connected by these edges may still be higher-order neighbors, such information becomes much more difficult for the commonly used shallow GNNs to capture~\citep{kipf2016gcn, velivckovic2017gat}. 
% This is because one graph convolution typically aggregates 1-hop information within a single layer~\citep{kipf2016gcn, velivckovic2017gat}. As a consequence, the commonly used shallow GNNs cannot properly extract such information.
% If the information from higher-order neighbors is not encoded in its first-order neighbors, it will be compromised~\citep{abu2019mixhop, feng2023diffuser, wang2020multi}. 
Meanwhile, although stacking multiple convolutional layers can capture multi-hop information, such an approach often leads to other severe issues such as over-smoothing~\citep{rusch2023oversmooth-1, chen2020oversmooth-2,rusch2023oversmoothing-8,cai2020oversmoothing-9} and over-squashing~\citep{alon2020bottleneck, topping2021oversquashing-1,di2023oversquashing-2,giraldo2023oversquashing-3}. Despite the critical limitation of the current sparsification mechanism, only limited studies have discussed such an issue~\citep{xin2021neurolkh} or proposed mechanisms to mitigate it~\citep{lischka2024less}, leaving the problem of losing edges in the underlying optimal solution unsolved.
% Nevertheless, these works still face the key problem of losing key edges that contains in the underlying optimal solution.

% Several studies have proposed mechanisms to mitigate 
% the issue of sparsification. However, these works either xx or xx.
% \cite{xin2021neurolkh} emphasizes that sparsification is crucial for effectively training models, especially on large-scale TSP. However, it merely mentions the importance of sparsification without addressing the aforementioned issue. \cite{lischka2024less} proposes a $K$-Tree-based method~\citep{helsgaun2000ktree} for sparsifying TSP instances, extracting the most significant subgraph. Nonetheless, when dealing with large-scale TSP, it still fails to address the issue of losing high-order neighbor information.

% which can be analyzed using the Laplacian matrix~\citep{merris1994laplacian, chung1997laplacian-1, zhu2021spectral-1} to derive properties such as connectivity, clustering and partitioning, 
Despite the significance of such an issue, it is non-trivial to handle and we mainly face two fundamental challenges: (1) \textit{Lack of an Informative Topological Prior}: Unlike classical graph learning problems that benefit from the topological prior of the input graph data~\citep{velivckovic2017gat, hamilton2017graphsage, xu2018gin}, the TSP graph is often modeled as a fully connected graph without any prior knowledge about the optimal topology~\citep{joshi2019efficient, fu2021generalize, sun2023difusco}. 
% This makes it difficult for the graph learning model to capture key structural information. Even when the adjacency matrix is sparsified using $K$-Nearest Neighbors, the degree of each node is constrained to the same fixed value. 
Such a limitation undermines the capability for the model to capture the solution-relevant information from topology, thereby jeopardizing the quality of learned node representations. (2) \textit{Lack of Information Exchange}: Sparsification may disconnect node pairs that are connected in the optimal solution. If such a situation occurs, the optimal node pairs might still be connected through multi-hop neighbors. However, capturing multi-hop information is challenging~\citep{rusch2023oversmooth-1, chen2020oversmooth-2, alon2020bottleneck}. Therefore, it is crucial to maintain direct connections to efficiently exchange information~\citep{abu2019mixhop, wang2020multi}. An even worse scenario may arise when the optimal node pairs are divided into completely different disconnected graph components. In such a case, graph convolution fails to propagate information between these components~\citep{mcglohon2008disconnect}, jeopardizing information exchange between the optimal node pairs and ultimately affecting the expressiveness of the node embeddings. Therefore, using graph learning to tackle TSP bears pressing challenges and related explorations remain nascent.

% When optimal nodes are multiple hops away, the information from multi-hop neighbors may be diminished~\citep{rusch2023oversmooth-1, chen2020oversmooth-2, alon2020bottleneck}. Therefore, maintaining direct connections with these nodes is crucial to effectively capturing the information of optimal nodes~\citep{abu2019mixhop, wang2020multi}. When such disconnection occurs, an even worse scenario may arise: retaining only $k$ nearest neighbors could result in a disconnected graph. If an optimal node resides in a different connected component, graph convolution will fail to propagate information across these components~\citep{mcglohon2008disconnect}, significantly hampers its expressiveness. Therefore, using graph learning to tackle TSP presents significant challenges and related explorations remain nascent.

To handle the challenges above, in this paper, we propose a novel framework named \textit{\underline{A}nisotropic \underline{G}raph \underline{D}iffusion \underline{N}etwork} (AGDN) for solving TSP. Specifically, we first propose the design of MixScore transition matrix, which simultaneously considers node similarity and pairwise distance information as priors, rather than simply treating the TSP graph as a fully connected graph.
% which leverages node and distance information to compute an attention score that fully characterizes the structure of the entire graph to address the issue of graph convolution becoming ineffective due to disconnected graphs~\citep{mcglohon2008disconnect}. 
Such an approach provides an informative topological prior as the input for our graph learning-based framework and thus effectively handles the challenge of lack of an informative topological prior. Subsequently, we introduced Graph Diffusion with Multi-hop Attention, which contains learnable parameters to control the weight of each hop. This enables immediate access to higher-order neighbor information within a single convolution layer. At its core, we adopted the concept of anisotropy, which separates information propagation for each node into incoming and outgoing directions to improve the characterizing granularity of information flow between nodes~\citep{beaini2021directional}.
Such designs effectively address the critical challenge of lack of information exchange. We summarize our contributions below.

% \vspace{0.05in}
\noindent \textbf{Contributions.} Our contributions can be summarized as threefold: 
\begin{itemize}[leftmargin=2.5em]
    \item \textbf{Problem Characterization.} We pinpoint fundamental limitations in current GNN approaches for TSP, specifically highlighting how graph sparsification disconnects optimal node pairs and how existing methods struggle with multi-hop information propagation in fully connected TSP graphs.
    \item \textbf{Algorithmic Design.} We introduces a novel principled framework AGDN, which combines a novel MixScore transition matrix incorporating structural priors with an anisotropic graph diffusion mechanism that efficiently propagates information bidirectionally through learnable multi-hop attention. 
    % \item \textbf{Empirical Evaluations.} Extensive empirical evaluations demonstrate state-of-the-art performance across different TSP sizes and distributions while showing superior generalization capabilities.
    \item \textbf{Empirical Evaluations.} Extensive empirical evaluations demonstrate strong effectiveness and efficiency in solving TSP, while also exhibiting robust generalization across varying problem sizes, node distributions, and real-world datasets.
\end{itemize}

\section{Preliminary}
\subsection{Notations}
\label{sec: notations}
Throughout this paper, we adopt the following notational conventions: bold uppercase letters (e.g., $\mW$) are used to denote matrices, bold lowercase letters (e.g., $\vh$) are used to denote vectors, and regular lowercase letters (e.g., $d$) are used to represent scalars. 
$\bm{h}_i$ is used to represent the $i$-th element in a given vector $\bm{h}$. We use a vector (e.g., $\bm{h}$) as the subscript of a function parameterized by it, e.g., $f_{\bm{h}}$. 
In this paper, we focus on the two-dimensional plane Euclidean Traveling Salesman Problem, i.e., nodes are positioned in a two-dimensional Euclidean space.
We define the search space for a TSP instance as a fully connected graph  \( \gG = (\gV, \gE) \), where each $\vv\in\gV$ is a two-dimensional vector representing the node coordinates in a two-dimensional space ($\vv\in[0,1]^2$). $e_{ij}\in\gE$ is the Euclidean distance between node $i$ and node $j$ in the two-dimensional space.

\subsection{Traveling Salesman Problem}
The Traveling Salesman Problem is a combinatorial optimization problem in which a salesman is required to visit $n$ cities exactly once and return to the starting city while minimizing the total travel distance. TSP can be formulated as
\[
    \min_{\pi \in \Pi_n} \sum_{i=1}^{n} d_{\pi(i), \pi(i+1)},
\]
where $\Pi_n$ is the set of all possible permutations of the cities, $\pi$ represents a specific tour permutation, and $d_{\pi(i), \pi(i+1)}$ is the distance between two consecutive cities $i$ and $i+1$ in the tour. The objective is to find an optimal tour $\pi^*$ that minimizes the total travel distance.

\subsection{Graph Diffusion Operation}
\label{sec:pre_gdm}

In this paper, we mainly focus on the graph diffusion operations designed with a solid mathematical foundation in Spectral Graph Theory~\citep{chung1997spectral-graph-theory}. In particular, we define graph diffusion using a generalized diffusion matrix:
\[
\mF = \sum_{k=0}^\infty\bm{\theta}_k\mT^k,
\]
where $\mF$ is the diffusion matrix, $\bm{\theta}$ represents the weight controlling the contribution of diffusion steps, and $\mT$ denotes the transition matrix, which represents the graph structure. To ensure the convergence of the diffusion matrix, constraints are often imposed on the weight coefficients, i.e., \(\sum_{k=0}^\infty\boldsymbol{\theta}_k=1, \text{where} \; \boldsymbol{\theta}_k\in[0,1]\). Additionally, $\mT$ should satisfy the condition that its eigenvalue $\lambda_i\in[0,1]$ and it is a stochastic matrix~\citep{xi2018stochastic}. When $\mK$ approaches infinity, the elements of matrix $\mF$ converge. $\mF$ represents the global dependencies of the graph. A graph learning model that leverages $\mF$ for propagating graph features is a spectral-based graph diffusion network, where a well-known example is Graph Diffusion Convolution~\citep{gasteiger2019diffusion}.

\section{Methodology}
% \yd{make sure your figure is on the same page with this paragraph, or at least neighboring pages.}
% \yd{please make sure your tentative Figure 1 follows such a way to draw things. Make this paragraph and figure align with each other as well as you can.}

\setlength{\belowcaptionskip}{-0.1in} % 缩短 caption 与正文之间的空白
\begin{figure*}[t]
    \centering
    \includegraphics[width=0.95\textwidth]{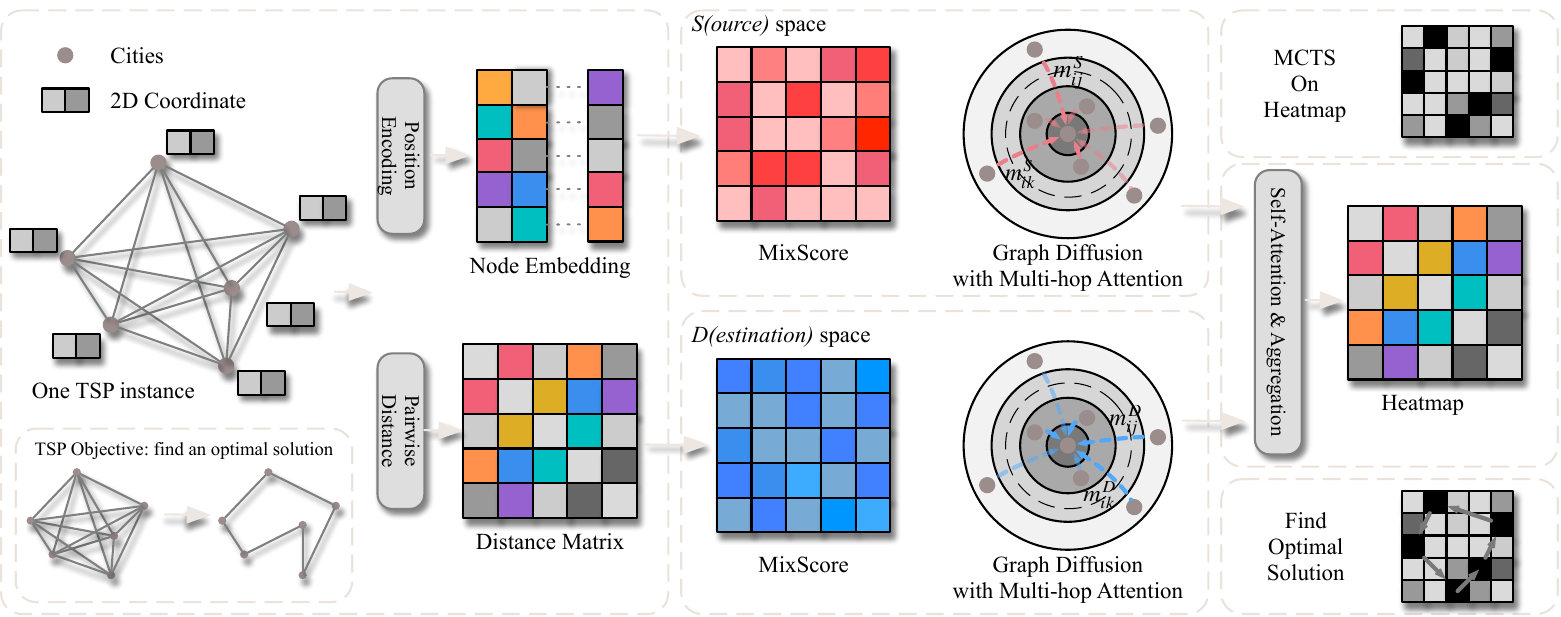}
    \caption{Overview of the proposed AGDN framework. Given a fully connected graph representing a TSP instance, we first construct a distance-based transition matrix that encodes edge-wise transition probabilities. We then compute a novel structural descriptor, MixScore, by jointly considering node features and transition information. MixScore is integrated into the diffusion process to obtain propagated node embeddings. To incorporate directional awareness, we introduce an anisotropic information propagation mechanism that guides embedding updates based on directionality. The resulting directional embeddings are used to generate a heatmap over the graph, which reflects the likelihood of edge participation in the optimal tour. Finally, we employ Monte Carlo Tree Search over the heatmap to identify the optimal TSP solution.}
    \label{fig:agdn}
\end{figure*}

\subsection{Overview}

In this section, we introduce our framework \textit{\underline{A}nisotropic \underline{G}raph \underline{D}iffusion \underline{N}etwork} (AGDN) designed for addressing TSP without compromising to the traditional sparsification process.  We first introduce a modified transition matrix different from the commonly used ones~\citep{kipf2016gcn}, which describes the transition probability of each node based on distances. Next, we propose MixScore, which combines node and transition information to capture the structural information of the entire graph. Then we integrate MixScore with the graph diffusion process to obtain the propagated node embeddings. At the same time, we introduce directional information propagation process, which we refer to as the anisotropic mechanism. Finally, the directional node embeddings are aggregated and utilized for generating heatmaps and identify the optimal TSP solution.

% \subsection{Transition Matrix}
\subsection{MixScore Transition Matrix}
\label{sec:mixscore}
We first propose a transition matrix based on pairwise node distances. Specifically, we notice that if a sparsified adjacency matrix is used for propagating graph information, it inevitably results in disconnected components~\citep{lischka2024less}, preventing information from being transmitted across these components. To avoid such an issue, we utilize a fully connected transition matrix to characterize the graph structure. We first compute the pairwise distance for each element in the set $\gE$, where $e_{ij}$ represents the distance between node $i$ and node $j$. Transition matrix $\mT$ is computed as
% \vspace{-0.01in}
\begin{align*}
    \mT_{ij} = \exp\left(-\frac{e_{ij}}{\sigma}\right),
\end{align*}
% \vspace{-0.02in}
% \[
% \mT_{ij} = \exp\left(-\frac{e_{ij}}{\sigma}\right),
% \]
% \vspace{-0.05in}
where \(\sigma\) is a scaling parameter that controls the sensitivity of the kernel. For smaller values of $\sigma$, transition probabilities between distant points remain significant. In contrast, larger $\sigma$ values cause these probabilities to diminish rapidly, emphasizing local interactions. Compared to traditional transition matrices based on degree, our transition matrix is calculated using pairwise distance. Since we lack a graph structure, distance effectively captures the transition trend, where closer distances indicate higher probabilities. Additionally, to ensure the convergence of the diffusion matrix as the diffusion steps approaches infinity, we propose to normalize the transition matrix with $\mT_\text{rw} = \mD^{-1}\mT$.

Based on the design of our transition matrix, we further propose a novel MixScore transition matrix, which encodes both node similarity information and distance information. Specifically, we utilize the widely adopted information propagation mechanism to ensure that neighboring nodes can access the information from each other. Notably, to achieve higher granularity in the learned node embeddings, we propose to learn and maintain node embeddings separately from both incoming and outgoing directions.
Here, information from the incoming direction is represented in the $\gS$ space, and similarly, information from the outgoing direction is represented in the $\gD$ space. We first map the two-dimensional node coordinates into a higher dimensional latent space using positional encoding~\citep{vaswani2017attention} to obtain node embeddings $\mH \in \mathbb{R}^{n \times d}$.
Subsequently, two multi-layer perceptron (MLP) models, $g^\gS_{\bm{\theta}_s}$ and $g^\gD_{\bm{\theta}_d}$, are used to project $\mH$ into two different spaces to obtain $\mH_\gS$ and $\mH_\gD$. Mathematically, node embeddings in $\mH_\gS$ and $\mH_\gD$ are given as
\begin{align}
    \mH_\gS = g^\gS_{\bm{\theta}_s}(\mH), \text{and} \; \mH_\gD = g^\gD_{\bm{\theta}_d}(\mH),
\end{align}
where $g^\gS_{\bm{\theta}_s}$ and $g^\gD_{\bm{\theta}_d}$ are parameterized by learnable parameter $\bm{\theta}_s$ and $\bm{\theta}_d$, respectively.
% two independent multi-layer perceptions, with their parameters denoted as $\xi_S$ and $\xi_D$, respectively. 
This process helps us obtain anisotropic node embeddings, where the information in the two spaces can be learned separately. 
On the basis of this, we propose to utilize the node embeddings from both directions to define MixScore transition matrix. Here, our intuition is to characterize the propagation of node information from both directions. Therefore, we propose to integrate $\mH_\gS$, $\mH_\gD$, and $\mT_\text{rw}$ together. We captured the similarity between any two points by computing the inner product of $\mH_S$ and $\mH_D$, while $\mT_{rw}$ described the transition probability between them. By integrating these two complementary perspectives, we effectively characterized the overall graph structure. We further propose to use an MLP model $f_{\bm{\omega}}$ to transform the integration to the space of the MixScore transition matrix.
Mathematically, we propose to formulate MixScore transition matrix as

\vspace{-1.5em} % 控制公式前空白
\begin{subequations}
\label{eq:mixscore}
\begin{align}
&\mM_\gS = \text{Softmax}(h_{\bm{\gamma}}((\mH_\gS \mH_\gD^\top) \| \mT_\text{rw}))\ and\\
&\mM_\gD = \text{Softmax}(h_{\bm{\gamma}}((\mH_\gD \mH_\gS^\top) \| \mT_\text{rw}^\top)),
\end{align}
\end{subequations}
\vspace{-1.5em} % 控制公式前空白

where $\|$ is the concatenation operator, and $h_{\bm{\gamma}}$ denotes an MLP model parameterized by $\bm{\gamma}$. With the proposed MixScore transition matrix, we are able to encode information from both directions centered on each node into one matrix to characterize the information exchange on the TSP graph effectively.

\subsection{Anisotropic Graph Diffusion}
\label{sec:agd}

We now propose to use MixScore as the transition matrix in the graph diffusion operation to better extract information from the TSP graph. Specifically, the mathematical formulation of the proposed anisotropic graph diffusion is given as
\begin{align}
    \text{GDMHA}(\bm{\theta}, \mM, \mH) = \sum_{k=0}^{K}\bm{\theta}_k\mM\mH.
\end{align}
Here we propose to adopt $\bm{\theta}_k$ as the multi-hop attention score, which is initialized with a PageRank-based importance score, i.e., $\bm{\theta}_k = \alpha (1-\alpha)^k$, where $\alpha$ is a tunable hyper-parameter.
$K$ denotes the total number of diffusion steps.
Such a design integrated with graph diffusion operation and multi-hop attention mechanism enables us to effectively capture information from multiple hops away for each node in the TSP graph.
Based on such a design, we further propose \textit{Anisotropic Graph Diffusion}, which enables the information to propagate simultaneously in both incoming and outgoing directions for each node. With this design, we are able to characterize the information propagation with higher granularity and thus enhances the model’s ability to capture intricate node relationships. Specifically, the mathematical formulation of the proposed Anisotropic Graph Diffusion is given as
\begin{subequations}
\begin{align}
    &\hat{\mH}_\gS = \text{GDMHA}(\boldsymbol{\theta}_\gS, \mM_\gS, \mH_\gS)\ \text{and} \\
    &\hat{\mH}_\gD = \text{GDMHA}(\boldsymbol{\theta}_\gD, \mM_\gD, \mH_\gD),
\end{align}
\end{subequations}
where the information encoded in the incoming and outgoing directions is separately learned in the $\gS$ and $\gD$ spaces, respectively. We then obtain $\hat{\mH}_{\gS}$ and $\hat{\mH}_{\gD}$ in their respective spaces after information propagation. To summarize, the proposed Anisotropic Graph Diffusion enables the model to capture information from the TSP graph by learning embeddings based on information from separated directions together with the multi-hop attention mechanism.

\subsection{Node Information Aggregation}

To obtain final node embeddings, we propose aggregating the information encoded in the node embeddings from both $\gS$ and $\gD$ spaces. To enable each space to focus on the most relevant features from the other, we propose to adopt the self-attention mechanism~\citep{vaswani2017attention}, using $\hat{\mH}_\gS$ as the query and $\hat{\mH}_{\gD}$ as both the key and the value. Mathematically, the updated embedding $\mH_{\gD}^{\prime}$ is given as

\begin{align}
\label{eq:self-attn}
\mH_\gD^{\prime} = \text{Softmax}\left(\frac{\hat{\mH}_\gS \hat{\mH}_\gD^\top}{\sqrt{d_k}}\right)\hat{\mH}_\gD.
\end{align}

Finally, we aggregate information from the two independent spaces $\hat{\mH}_\gS$ and $\hat{\mH}_\gD$ by using 

\begin{align}
    \tilde{\bm{H}} = f_{\bm{\phi}}(\hat{\mH}_\gS\|\mH_\gD^\prime),
\end{align}
where $f_{\bm{\phi}}$ is an MLP parameterized by $\bm{\phi}$. Such an asymmetrical node information aggregation between the two spaces empirically helps to reduce the instability and effectively integrates the information from the two distinct spaces.

\subsection{Solution Identification and Optimization}
% \todo{math eq}

We finally introduce how to utilize the aggregated node embeddings $\tilde{\bm{H}}$ to obtain the predicted optimal solution to the given TSP problem and conduct the learning optimization. Here, our intuition is to first model edge embeddings between any pair of nodes in the TSP graph based on the learned $\tilde{\bm{H}}$. Then we utilize an attention mechanism to enhance focus on certain specific edges. Finally, we utilize an MLP model to transform the edge embeddings into a predicted heatmap, which highlights the potential routes that involve the optimal route with high probability. Based on such a heatmap, the widely used traditional solvers such as Monte Carlo Tree Search (MCTS)~\citep{browne2012mcts} can be guided to search for an optimal route efficiently.

Our first step here is to obtain the edge embeddings with the learned $\tilde{\bm{H}}$. Specifically, we propose to construct a set of matrices $\mathcal{E}_\text{emb} = \{\bm{E}^{(1)}, \bm{E}^{(1)}, ..., \bm{E}^{(d)}\}$, where the $i$-th matrix encodes the values at the $i$-th dimension of all $n^2$ edge embeddings. We propose to model each $\bm{E}^{(i)}$ as
\begin{align}
    \bm{E}^{(i)} = \tilde{\bm{H}}_{:,i} \tilde{\bm{H}}_{:,i}^{\top} \; (1 \leq i \leq d),
\end{align}
where $\tilde{\bm{H}}_{:,i}$ is used to denote the $i$-th column of the obtained node embedding matrix $\tilde{\bm{H}}$. Our intuition here is to encode pairwise node similarity information to capture their complex relationships in the given TSP graph.

We then propose to utilize an edge-level attention mechanism to re-weight each matrix in the constructed set $\mathcal{E}_\text{emb}$, so that the edges between certain pair of nodes can be highlighted out of the $n^2$ entries to provide better guidance for the proposed model. Specifically, we propose to formulate the attention matrix as 
\begin{align}
\mE_{\text{attn}} = \text{Softmax}\left(\frac{\tilde{\bm{H}}\tilde{\bm{H}}^\top}{\sqrt{d_k}}\right),
\end{align}
where $\mE_{\text{attn}}\in\mathbb{R}^{N\times N}$ is the matrix of the learned edge-level attention score, which helps quantify the significance of edges based on the latent relationships between nodes. Next, utilize $\mE_{\text{attn}}$ to obtain the set of new edge embeddings highlighted by the learned edge-level attention scores. Mathematically, the new edge embedding set is $\tilde{\mathcal{E}}_\text{emb} = \{\tilde{\bm{E}}^{(1)}, \tilde{\bm{E}}^{(1)}, ..., \tilde{\bm{E}}^{(d)}\}$, where $\tilde{\bm{E}}^{(i)} = \bm{E}^{(i)} \odot \mE_{\text{attn}} (1 \leq i \leq d)$ and $\odot$ is the Hadamard product operator.

Finally, we construct an $n \times n \times d$ three-dimensional tensor by stacking all $d$ matrices in $\tilde{\mathcal{E}}_\text{emb}$ together. In this three-dimensional tensor, the vector along the third dimension at position $(i,j)$ on the slice denotes the $d$-dimensional embeddings of the edges connecting the $i$-th and $j$-th node. We propose to utilize an MLP model to perform a binary classification on the existence of each edge in the underlying optimal route, and we construct an $n \times n$ heatmap $\hat{\bm{A}}$ with the predicted positive probability at each edge.

% We now introduce our optimization objective.
Note that we focus on both supervised learning and unsupervised learning settings for model optimization. In supervised learning settings, we directly take the routes yielded by the common solvers~\citep{applegate2009concorde} as the optimal solution, and then we build a heatmap $\bm{A}$ according to the edges presented in the identified optimal solution. To optimize the model, we utilize the widely adopted cross entropy loss to minimize the difference between the output heatmap $\hat{\bm{A}}$ and the heatmap $\bm{A}$ built based on the optimal solution.
In unsupervised learning settings, we follow the common settings in this domain~\citep{min2024utsp} to minimize the summation of the output $\hat{\bm{A}}$ with the $(i, j)$ entry being re-weighted by the corresponding distance value $e_{i,j}$. Here, the intuition is to minimize the average distance between all pairs of nodes with the output probabilities of edge existence, and we introduce details in Appendix.
%
% our output is a soft indicator matrix $\mathbb{T}$, which is used to compute the heatmap and the surrogate loss. Our optimization objective consists of three components: first, ensuring that the sum of each row of $\mathbb{T}$ equals 1 and that $\mathbb{T}$ is column-normalized by Softmax, making it doubly stochastic~\citep{sinkhorn1967doubly} when this term is minimized to 0; second, eliminating self-loops in the heatmap; and third, minimizing the path length obtained by multiplying the distance matrix with the heatmap. \yd{add explanation and references}. 
%
In both cases, the output heatmap $\hat{\bm{A}}$ highlights the critical edges likely to appear in the optimal solution, and we follow the common settings to utilize common solvers to obtain the optimal routes. Specifically, we extracted the value at the second position of the model’s output heatmap logits as our heatmap. Subsequently, for each heatmap, we retained the top $k$ largest values in each row as input for MCTS. The parameters for MCTS were fully configured according to the settings in \cite{min2024utsp}.

\section{Theoretical Analysis}
In this section, we explore the proposed anisotropic mechanism and the MixScore transition matrix, which together significantly enhance multi-hop neighbor communication for solving the TSP. Building on prior work~\citep{beaini2021directional}, where the diffusion distance between nodes is defined as
\(d_t(x, y) := \left( \sum_{z \in \mathcal{V}} \left( q_t(x, z) - q_t(y, z) \right)^2 \right)^{1/2},\)
we extend this notion to incorporate our anisotropic diffusion framework through the following definitions.

\begin{definition}[Anisotropic MixScore Diffusion Distance]
Let \( \mM_\gS, \\ \mM_\gD \in \mathbb{R}^{|\mathcal{V}| \times |\mathcal{V}|} \) denote the two direction MixScore transition matrices, respectively. For each node \( x \in \mathcal{V} \), define the $k$-step transition distribution from \( x \) to \( z \) as \( q_k^{\mM_\gS} \), and similarly for node \( y \in \mathcal{V}\), define the $k$-step transition distribution from \( y \) to \( z \) as \( q_k^{\mM_\gD} \). Then the \emph{anisotropic MixScore diffusion distance} between nodes \( x \) and \( y \) after \( k \) steps is defined as:
\[
d^{\mM}_k(x, y) := \left( \sum_{z \in \mathcal{V}} \left( q_k^{\mM_\gS}(x, z) - q_k^{\mM_\gD}(y, z) \right)^2 \right)^{1/2}.
\]
\end{definition}

This proposition characterizes the diffusion distance induced by directional MixScore matrices, denoted as \( \mM_\gS \) and \( \mM_\gD \), capturing information propagation along distinct directions. Further inspired by~\cite{beaini2021directional}, we generalize the notion of gradient steps to a bi-directional setting and introduce the following definition:

\begin{definition}[Anisotropic Gradient Step]
\label{proposition:bi-grad-step}
Let $x, y, z \in \mathcal{V}$ be nodes such that $z$ is a common neighbor of both $x$ and $y$, and let $\phi \in \mathbb{R}^{|\mathcal{V}|}$ be its first non-trivial left eigenvector. Suppose $\mM_\gS$ and $\mM_\gD$ are defined as in Equation~\ref{eq:mixscore}. We say that $x$ takes a forward gradient step toward $z$ under $\mM_\gS$ if $\phi(z) - \phi(x)$ is maximized among the neighbors of $x$, and that $y$ takes a backward gradient step toward $z$ under $\mM_\gD$ if $\phi(z) - \phi(y)$ is maximized among the neighbors of $y$. Then, performing both updates simultaneously defines an \emph{anisotropic gradient step} toward $z$, yielding updated nodes $(x', y')$.
\end{definition}

Then we arrive at the following theorem, which demonstrates that simultaneous gradient descent along both directions leads to a contraction in diffusion distance:

\vspace{-0.2em} % 控制公式前空白
\begin{theorem}[Anisotropic Gradient Step Reduces Diffusion Distance]
\label{theorem:bi-grad-reduce}
Let $x, y, z \in \mathcal{V}$ be nodes such that $\phi(x) < \phi(z)$ and $\phi(y) < \phi(z)$, and let $(x', y')$ be the result of performing a bi-directional gradient step toward $z$ as defined in Proposition~\ref{proposition:bi-grad-step}. Then there exists a constant $C > 0$ such that for all $t \geq C$, the diffusion distance satisfies:
% \vspace{-0.2em} % 控制公式前空白
\[
d_k^{\mM}(x', y') < d_k^{\mM}(x, y),
\]
% \vspace{-0.2em} % 控制公式前空白
where $d_k^{\mM}(\cdot, \cdot)$ denotes the diffusion distance under the MixScore transition matrix $\mM_\gS$ and $\mM_\gD$.
\end{theorem}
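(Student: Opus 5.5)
We follow the argument of \citet{beaini2021directional} for directional gradient steps and adapt it to two transition matrices. First we fix what ``large $k$'' means here: the statement writes $t\geq C$, and we read $t$ as the number of diffusion steps $k$. The starting point is a spectral expansion of the $k$-step distributions. Row-softmax makes $\mM_\gS$ and $\mM_\gD$ row-stochastic with strictly positive entries, since the MixScore graph is fully connected. By Perron--Frobenius each matrix therefore has a simple eigenvalue $1$ and all other eigenvalues of modulus $<1$.

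To make the analysis tractable we assume, as in \citet{beaini2021directional}, that both operators are diagonalizable. We also assume they share the first non-trivial left eigenvector $\phi$ of Definition~\ref{proposition:bi-grad-step}. This assumption is needed for the definition to be meaningful, and the proof will state it explicitly. Writing $\psi_j$ for right eigenvectors and $\phi_j$ for left eigenvectors, we expand
\[
q_k^{\mM_\gS}(x,z)=\pi_\gS(z)+\lambda_\gS^k\,\psi_\gS(x)\phi(z)+O(|\mu_\gS|^k),
\]
and analogously for $\mM_\gD$. Here $\mu$ denotes the third-largest eigenvalue in modulus.

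The key steps, in order, are as follows.
\begin{enumerate}
\item Substitute the expansion into $d_k^{\mM}(x,y)^2$ and separate three contributions:
\begin{itemize}
\item a stationary mismatch $\|\pi_\gS-\pi_\gD\|^2$, which does not depend on $x,y$ and therefore cancels when we compare $(x,y)$ with $(x',y')$;
\item a cross term linear in $\lambda^k\bigl(\psi_\gS(x)-\psi_\gD(y)\bigr)$;
\item a quadratic term $\lambda^{2k}\bigl(\psi_\gS(x)-\psi_\gD(y)\bigr)^2\|\phi\|^2$.
\end{itemize}
To keep the comparison clean, we first treat the case $\pi_\gS=\pi_\gD$. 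The general case adds a term $2\lambda^k\langle \pi_\gS-\pi_\gD,\phi\rangle\bigl(\psi_\gS(x)-\psi_\gD(y)\bigr)$ that must be controlled separately.
\item Use the left/right biorthogonality to relate the $x$-dependence to $\phi$ itself. This gives $d_k^{\mM}(x,y)\approx\lambda^k|\phi(x)-\phi(y)|\cdot c+O(|\mu/\lambda|^k)$ relative error, the same reduction \citet{beaini2021directional} use in the symmetric case.
\item Show that the gradient step strictly decreases $|\phi(x)-\phi(y)|$. Since $\phi(x),\phi(y)<\phi(z)$ and each node moves to the neighbor maximizing the increase of $\phi$, both coordinates move up toward $\phi(z)$. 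The neighbor $z$ itself is available to both nodes, so after the step $\phi(x'),\phi(y')\in(\max\{\phi(x),\phi(y)\},\,\phi(z)]$ when both move to $z$. In that case the gap becomes $0$, or at least contracts, because both values are pushed into an interval that excludes the lower original value.
\item Conclude. The leading term decreases by a strictly positive amount $\delta\lambda^k$, while the error is $O(\mu^k)$. Choosing $C$ with $(|\mu|/\lambda)^{C}<\delta/(2\cdot\text{const})$ gives $d_k^{\mM}(x',y')<d_k^{\mM}(x,y)$ for all $k\geq C$.
\end{enumerate}

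The main obstacle is step 3. Maximizing $\phi$ over neighbors does not by itself guarantee that the gap $|\phi(x')-\phi(y')|$ shrinks: if $x$ jumps above $y$'s new value, the gap could in principle grow. I would first try to use the fact that $z$ is a common neighbor together with full connectivity. On the complete MixScore graph every node is a neighbor of every other, so both maximizers coincide at $\arg\max_v\phi(v)$, giving $x'=y'$ and a gap of $0$. This is strict because $\phi(x)\neq\phi(y)$ or, if the gap is already $0$, the term $\psi_\gS-\psi_\gD$ is shrunk. If the nonzero gap case resists this argument, I would restrict the claim to the generic situation in which the gap is positive. A secondary difficulty is the non-shared-eigenvector case. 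There I would assume the two spectral gaps are equal, or accept the additive stationary term as a constant that cancels in the comparison.
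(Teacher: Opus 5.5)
Your outline matches the paper's: the leading eigencomponent contracts, the remaining ones decay faster, and you choose $C$. But step~2, which carries the whole argument, fails for the matrices you are working with.

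In your expansion the starting node enters $q_k^{\mM_\gS}(x,\cdot)$ only through the \emph{right} eigenvector $\psi_\gS(x)$. Biorthogonality $\langle\phi_i,\psi_j\rangle=\delta_{ij}$ is a global identity and gives no pointwise relation between $\psi_\gS(x)$ and $\phi(x)$. Even for a reversible chain one has $\psi=\phi/\pi$, and a row-softmax matrix is in general not reversible. A shared $\phi$ also does not give $\psi_\gS=\psi_\gD$, because the dual vector to $\phi$ depends on all the other left eigenvectors. And if $\lambda_\gS\neq\lambda_\gD$, there is no common factor $\lambda^k$. So the leading quantity is $\lambda_\gS^k\psi_\gS(x)-\lambda_\gD^k\psi_\gD(y)$, which a step defined purely through $\phi$ does not control.

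The failure is visible directly. A step ``toward $z$'' lands on $z$ (the paper's proof speaks of ``a forward gradient step from $x$ to $z$''), so $x'=y'=z$ always, and your step~3 obstacle is not the real one. The real one is that $d_k^{\mM}(z,z)=\|q_k^{\mM_\gS}(z,\cdot)-q_k^{\mM_\gD}(z,\cdot)\|$ is generically nonzero. When $\pi_\gS=\pi_\gD$, its leading part is governed by $\psi_\gS(z)-\psi_\gD(z)$, and nothing prevents this from exceeding $|\psi_\gS(x)-\psi_\gD(y)|$. Your fallback for the zero-gap case (that $\psi_\gS-\psi_\gD$ ``is shrunk'') has no mechanism behind it.

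Moreover, the deferred case $\pi_\gS\neq\pi_\gD$ is the generic one, since $\mM_\gD$ is essentially the row-softmax of the transposed logits of $\mM_\gS$. There the cross term $2\lambda^k\langle\pi_\gS-\pi_\gD,\phi\rangle(\psi_\gS(x)-\psi_\gD(y))$ is of order $\lambda^k$, dominates the order-$\lambda^{2k}$ quadratic term, and is signed. Moving $\psi_\gS(x)-\psi_\gD(y)$ toward $0$ therefore \emph{increases} the distance whenever $\lambda^k(\psi_\gS(x)-\psi_\gD(y))$ and $\langle\pi_\gS-\pi_\gD,\phi\rangle$ have opposite signs. This term decides the outcome; it is not something to be controlled separately.

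The paper never meets these obstacles because its proof works with a different object. It writes $d_k^{\mM}(x,y)^2=\sum_{i\ge1}e^{-2k\lambda_i}(\phi_i(x)-\phi_i(y))^2$ with orthonormal eigenvectors of a single symmetric normalized Laplacian associated with $\mM$. That is a one-operator diffusion distance: left and right eigenvectors coincide, there is no stationary mismatch, and the leading term is literally $(\phi_1(x)-\phi_1(y))^2$. With $x'=y'=z$, that formula even gives $d_k^{\mM}(x',y')=0$ outright.

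To make your version rigorous you need a hypothesis of the same type. For instance, assume:
\begin{itemize}
\item $\mM_\gS$ and $\mM_\gD$ are symmetric, hence doubly stochastic, with uniform stationary distribution and $\psi=\phi$;
\item they share a unit-norm eigenvector $\phi$ with a common nontrivial eigenvalue $\lambda$ that is strictly dominant, with $|\mu|<|\lambda|$ the largest remaining modulus;
\item $\phi(x)\neq\phi(y)$.
\end{itemize}
Then $q_k^{\mM_\gS}(x,\cdot)-q_k^{\mM_\gD}(y,\cdot)=\lambda^k(\phi(x)-\phi(y))\phi+O(|\mu|^k)$ while $d_k^{\mM}(z,z)=O(|\mu|^k)$, and your step~4 closes. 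Without such a hypothesis, your steps~2--4 do not establish the inequality for the anisotropic distance as defined.
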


This result highlights the effectiveness of our anisotropic graph diffusion mechanism. By integrating direction-specific MixScore matrices, the framework facilitates faster and more robust long-range information aggregation, especially on large-scale TSP instances, thereby validating the effectiveness of our proposed method. We provide a detailed proof in the Appendix~\ref{appendix:proof}.

\section{Experiments}
In this section, we aim to evaluate the performance of AGDN with extensive experiments. Specifically, we focus on the three research questions below. \textbf{RQ1}: How effective is the Anisotropic Graph Diffusion Network (AGDN) in solving the TSP? \textbf{RQ2}: How efficient is the AGDN when applied to solve TSP? \textbf{RQ3}: What is the generalization performance of AGDN across different problem sizes, diverse node distributions and real-world datasets?

\setlength{\tabcolsep}{2pt}
\begin{table*}[t]
\vspace{0.1in}
\begin{sc}
\begin{center}
\caption{Performance comparison under different TSPs with the number of nodes being 200, 500, and 1000, respectively. RL, SL, UL, G, AS, and BS denotes Reinforcement Learning, Supervised Learning, Unsupervised Learning, Greedy Search, Active Search, and Beam Search, respectively. $\downarrow$ indicates the lower, the better. All results from AGDN are in \textbf{Bold}. Note that we were unable to reproduce the model of \citep{sun2023difusco} under the setting SL+MCTS, a situation also mentioned in \cite{xia2024position}. Accordingly, we use N/A to indicate that the code or experimental settings cannot be reproduced, and there are no results available for reference.}\label{tab:exp-tsp}
\vspace{-0.35em}
% \footnotesize
% \selectfont
\centering
\resizebox{0.98\textwidth}{!}{
\begin{tabular}{ll|ccc|ccc|ccc}
\toprule
\multirow{2}*{Method}&\multirow{2}*{Type}%
&\multicolumn{3}{c|}{TSP-200}&\multicolumn{3}{c|}{TSP-500}&\multicolumn{3}{c}{TSP-1000}\\
&&Length $\downarrow$&Gap $\downarrow$&Time $\downarrow$
&Length $\downarrow$&Gap $\downarrow$&Time $\downarrow$
&Length $\downarrow$&Gap $\downarrow$&Time $\downarrow$
\\
% optimal
\midrule
Concorde& Exact
&10.7&0.00\%&3.44$\mathrm{m}$
&16.6&0.00\%&37.7$\mathrm{m}$
&23.1&0.00\%&6.65$\mathrm{h}$
\\
LKH-3& Heuristics
&10.7&-0.14\%&2.01$\mathrm{m}$
&16.6&0.00\%&11.4$\mathrm{m}$
&23.1&0.00\%&38.1$\mathrm{h}$
\\
% reinforcement learning
\midrule
AM&RL+G
&N/A&N/A&N/A
&20.0&21.0\%&1.51$\mathrm{m}$
&31.2&34.8\%&3.18$\mathrm{m}$
\\
POMO+EAS-Emb&RL+AS+G
&N/A&N/A&N/A
&19.2&16.3\%&12.8$\mathrm{h}$
&N/A&N/A&N/A
\\
POMO+EAS-Tab&RL+AS+G
&N/A&N/A&N/A
&24.5&48.2\%&11.6$\mathrm{h}$
&49.6&114\%&63.5$\mathrm{h}$
\\
DIMES &RL+G
&N/A&N/A&N/A
&18.9 &14.4\% &0.97$\mathrm{m}$ 
&26.6 &15.0\% &2.08$\mathrm{m}$ 
\\
DIMES &RL+AS+G
&N/A&N/A&N/A
&17.8&7.61\%&2.10$\mathrm{h}$
&24.9&7.74\%&4.49$\mathrm{h}$
\\
% supervised learning
\midrule
GatedGCN&SL+BS
&16.2&51.0\%&4.63$\mathrm{m}$
&30.4&83.6\%&38.0$\mathrm{m}$
&51.3&121\%&51.7$\mathrm{m}$
\\
GatedGCN&SL+MCTS
&10.7&0.21\%&0.52$\mathrm{s}$
&16.8&1.28\%&1.46$\mathrm{s}$
&24.8&7.26\%&4.96$\mathrm{s}$
\\
Att-GCN &SL+MCTS 
&10.7&0.16\%&20.6$\mathrm{s}$
& 17.0 & 2.54\% & 2.20$\mathrm{m}$ 
& 23.9 & 3.22\% & 4.10$\mathrm{m}$ 
\\
DIFUSCO & SL+MCTS
&N/A&N/A&N/A
& {16.6} & {0.46\%} & {10.1$\mathrm{m}$}
& {23.4} & {1.17\%} & {24.5$\mathrm{m}$}
\\
\method(Ours) & SL+MCTS
&\textbf{10.7}&\textbf{0.06\%}&\textbf{1.77$\mathrm{s}$}
&\textbf{16.7}&\textbf{0.74\%}&\textbf{2.30$\mathrm{s}$}
&\textbf{23.4}&\textbf{1.28\%}&\textbf{2.40$\mathrm{s}$}
\\
% unsupervised learning
\midrule
Position & SoftDist+MCTS
&N/A&N/A&N/A
&16.8&1.44\%&0.00$\mathrm{s}$
&23.6&2.20\%&0.00$\mathrm{s}$
\\
UTSP &UL+MCTS
&10.7&0.09\%&0.78$\mathrm{s}$
&16.7&0.84\%&0.85$\mathrm{s}$ 
&23.4&1.18\%&1.82$\mathrm{s}$ 
\\
\method(Ours) &UL+MCTS
&\textbf{10.7}&\textbf{0.05\%}&\textbf{0.66$\mathrm{s}$}
&\textbf{16.7}&\textbf{0.64\%}&\textbf{0.64$\mathrm{s}$}
&\textbf{23.3}&\textbf{0.96\%}&\textbf{0.97$\mathrm{s}$}
\\
\bottomrule
\end{tabular}
}
\end{center}
\end{sc}
% \vspace{-0.1in}

\vspace{-0.2em}
\end{table*}
\setlength{\tabcolsep}{6pt}

\subsection{Experimental Setup}

In this subsection, we outline the most representative settings, and details about implementations can be found in Appendix.

\noindent \textbf{Learning Paradigms.} We focus on two types of learning paradigms that are commonly studied in TSPs: (1) Supervised Learning: we first utilize a solver to generate optimal solutions and subsequently derive a ground truth heatmap based on the resulting route. The objective is to predict a heatmap that represents the probability of each edge being an optimal edge, which is then used to guide the Monte Carlo Tree Search (MCTS)~\citep{browne2012mcts} in generating the final path prediction; (2) Unsupervised Learning: Unlike the supervised approach, this paradigm does not rely on a solver to pre-compute the optimal solution for each TSP instance. Instead, we directly calculate the distance between all pairs of nodes re-weighted by the probabilities of edge existence and minimize the average distance between all node pairs.

\noindent
\textbf{Baselines}. 
% To evaluate the effectiveness of AGDN, we compared it with the most representative alternatives in this domain. 
%
% Specifically, t
There are four main categories for solving TSPs: operations research (OR)~\citep{hillier2015or}, reinforcement learning, supervised learning, and unsupervised learning. For each of these mainstream, we select the most representative works as our baselines. For OR methods, we adopt the widely used Concorde~\citep{applegate2009concorde} and LKH~\citep{helsgaun2017lkh} to solve the problems and derive optimal solutions. For reinforcement learning approaches, we adopt AM~\citep{kool2018attention}, POMO~\citep{kwon2020pomo}, and DIMES~\citep{qiu2022dimes}, both of which employ the REINFORCE algorithm~\citep{williams1992reinforce} as a backbone. In the Supervised Learning category, we compare against GatedGCN~\citep{joshi2019efficient}, which is the first to propose using graphs for solving TSP; Att-GCN~\citep{fu2021generalize}, which concatenates heatmaps to enable predictions on TSP of arbitrary sizes; and DIFUSCO~\citep{sun2023difusco}, the first diffusion-based model for TSP. In the Unsupervised Learning category, we compared against Position~\citep{xia2024position} and  UTSP~\citep{min2024utsp}.

\noindent\textbf{Datasets}. Our experiments focus on the widely studied two dimensional Euclidean TSP. The dataset used in this study was derived from the work of \cite{fu2021generalize}, including 1 million training instances and 10,000 test instances for TSP-100. And for TSP-200, TSP-500, and TSP-1000, the dataset includes 3,000 training instances and 128 test instances for each problem size. Additionally, these datasets were utilized for the experiments on size generalization. For the distribution generalization experiments, we follow the settings in \cite{fang2024invit}, whose dataset is widely used to evaluate model performance under different node distributions, including Uniform, Cluster, Explosion, and Implosion. We selected test instances from these four distributions for evaluation on TSP-100 and TSP-1000.

\noindent\textbf{Metrics}. We evaluate the model based on two commonly studied dimensions~\citep{kool2018attention}. (1) Effectiveness: \textsc{Length} represents the length of the optimal solution, with smaller values indicating better performance. To provide a more intuitive understanding of the gap between the near-optimal and optimal solutions, we adopt the \textsc{Gap} metric, which quantifies the relative difference between the predicted route length and the optimal solution, which is given as \textsc{Gap} $= (\text{Predicted Length} - \text{Optimal Length})/\text{Optimal Length} \times 100\%$. Such a metric directly reflects the quality of the identified solution in comparison to the optimal solution. (2) Efficiency: \textsc{Time} represents the duration required for the model’s inference process. Additionally, we will discuss the number of learnable parameters in the model. Overall, we consider a model to be more efficient if it requires less inference time and utilizes fewer parameters.

% \yd{all subsections 5.2, 5.3, and 5.4 below are too short currently and this is unacceptable. For each section, please 1 - use a paragraph to clarify which RQ we wnat to answer; on which metrics we are comparing performances; etc.; 2 - use bold subtitles for each paragraph after the first one to analyze results from multiple dimensions IN DETAILS. The length for each subsection should match this paper https://dl.acm.org/doi/pdf/10.1145/3637528.3671744 in the experiments part. So please READ HOW IT DISCUSSED ITS EXPERIMENTS FIRST AND IMITATE.}
% \yd{make sure }

\setlength{\tabcolsep}{2pt}
\begin{table*}[t]
\vspace{0.15in}
\caption{Performance comparison under different TSP sizes including TSP-200, TSP-500, and TSP-1000.}\label{tab:generalization}
\vspace{-0.05in}
\begin{sc}
\begin{center}
\footnotesize
\centering
\resizebox{0.95\textwidth}{!}{
\begin{tabular}{ll|ccc|ccc|ccc}
\toprule
\multirow{2}*{Method}&\multirow{2}*{Type}%
&\multicolumn{3}{c|}{TSP-200}&\multicolumn{3}{c|}{TSP-500}&\multicolumn{3}{c}{TSP-1000}\\
&&Length $\downarrow$&Gap $\downarrow$&Time $\downarrow$
&Length $\downarrow$&Gap $\downarrow$&Time $\downarrow$
&Length $\downarrow$&Gap $\downarrow$&Time $\downarrow$
\\
% optimal
\midrule
Concorde& Exact
&10.7&0.00\%&3.44$\mathrm{m}$
&16.6&0.00\%&37.7$\mathrm{m}$
&23.1&0.00\%&6.65$\mathrm{h}$
\\
LKH-3& Heuristics
&10.7&-0.14\%&2.01$\mathrm{m}$
&16.6&0.00\%&11.4$\mathrm{m}$
&23.1&0.00\%&38.1$\mathrm{m}$
\\
% Our
\midrule
GatedGCN&SL+MCT
&10.8&0.26\%&0.53$\mathrm{s}$
&16.7&1.16\%&1.47$\mathrm{s}$
&25.9&12.1\%&5.00$\mathrm{s}$
\\
\method(Our)&SL+MCT
&\textbf{10.7}&\textbf{0.17\%}&\textbf{1.73$\mathrm{s}$}
&\textbf{16.7}&\textbf{0.87\%}&\textbf{2.33$\mathrm{s}$}
&\textbf{23.4}&\textbf{1.28\%}&\textbf{2.40$\mathrm{s}$}
\\

\bottomrule
\end{tabular}
}
\end{center}
\end{sc}
% \vspace{-0.1in}

% \vspace{-0.2in}
\end{table*}
\setlength{\tabcolsep}{8pt}

\vspace{-0.2em}

\subsection{Effectiveness Evaluation for AGDN}

To answer \textbf{RQ1}, we conduct experiments on TSP problems of three different sizes to evaluate the effectiveness of AGDN. Specifically, we focus on approaches that integrate heatmap generation with Monte Carlo Tree Search (MCTS)~\citep{browne2012mcts}. This combination has demonstrated superior performance in prior studies~\citep{xia2024position}. To evaluate the effectiveness of the model, we analyze its performance in both Supervised Learning and Unsupervised Learning frameworks~\cref{tab:exp-tsp}. (1) In supervised learning, AGDN achieved the shortest paths in most cases, resulting in the lowest gap relative to the optimal solution. The exception is DIFUSCO~\citep{sun2023difusco}, where the validation issue of the results is widely reported~\citep{xia2024position}. Compared to the widely used GatedGCN with MCTS method, our approach achieved improvements of 71\%, 42\%, and 82\% across different problem sizes, respectively. This demonstrates that our method indeed generates higher-quality heatmaps, further proving its effectiveness. (2) In the unsupervised learning framework, AGDN demonstrates consistent performance improvements. Compared to the state-of-the-art model, UTSP~\citep{min2024utsp}, our approach achieves enhancements of 44\%, 24\%, and 26\% on TSP-200, TSP-500, and TSP-1000, respectively. Since both methods adopt the MCTS algorithm for path searching and use exact same search strategy as UTSP, the impact of the search process can be excluded. These results further demonstrate that AGDN is capable of generating higher-quality heatmaps, reaffirming its effectiveness and its applicability in unsupervised settings. From \cref{fig:heatmap}, we observe that the adjacency matrix obtained after the traditional sparsification mechanism (collected from GateGCN) contains only binary elements, while MixScore effectively learn continuous weights across all node pairs, which helps achieve better performance.

% In summary, AGDN proves to be effective in solving the TSP problem, achieving the best performance across all supervision settings.

% \begin{figure}[t]
%     \centering
%     \includegraphics[height=4.cm]{imgs/plot_efficiency.pdf} \\
%     \vspace{-0.15in}
%     \caption{Efficiency comparison between AGDN and GatedGCN under different number of hops for information aggregation. We observe that AGDN demonstrates clear superiority in both total number of parameters and inference time.}
%     \vspace{-0.1in}
%     \label{fig:efficiency}
% \end{figure}

% % AGDN leverages a diffusion matrix for rapid neighbor aggregation, significantly reducing the number of parameters. Furthermore, since GatedGCN requires complex computations at each layer,

\begin{figure}[t]
    \centering
    % 第一个子图
    \vspace{0.4cm} 
    \hspace{-0.8cm}
    \begin{subfigure}[b]{0.25\textwidth} % 调整宽度占比
        \centering
        \includegraphics[height=3.5cm]{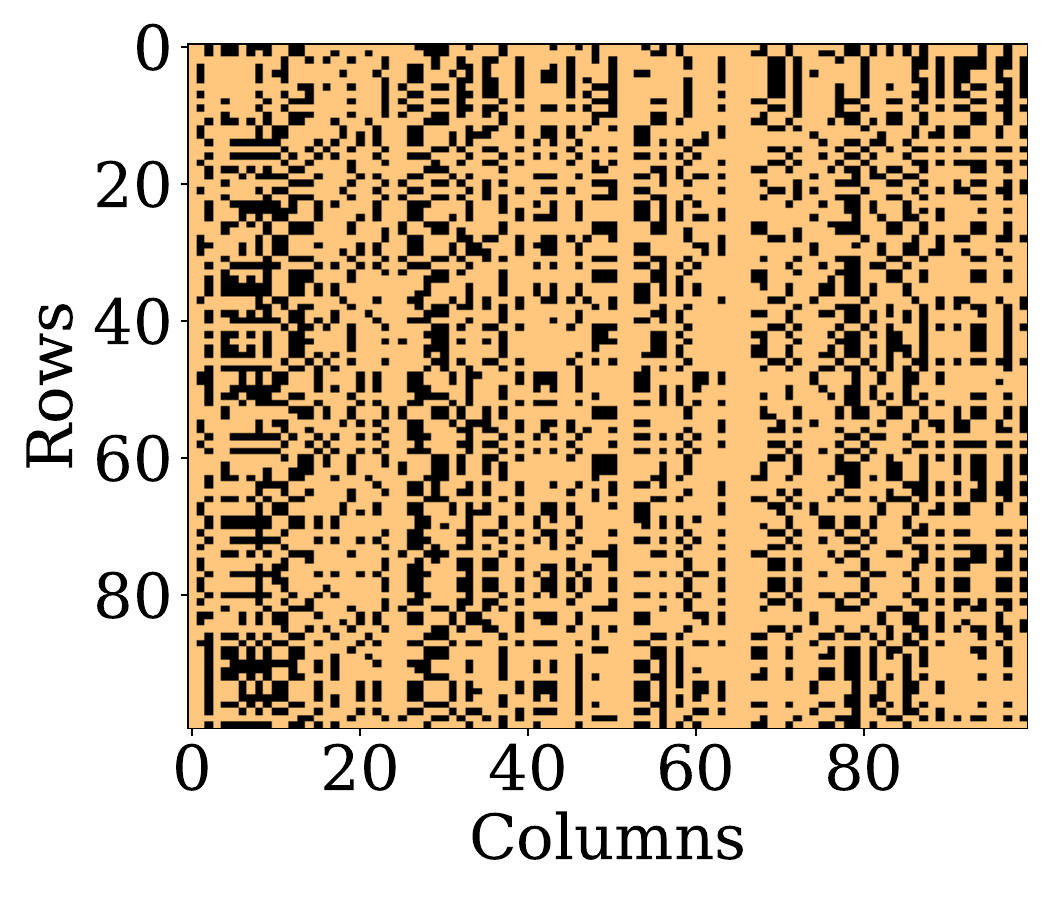} \\
                \vspace{-0.1in}
        \caption{Sparsification Result} % 子图标题
        \label{fig:sub1}
    \end{subfigure}
    % 第二个子图
    \hspace{-0.5cm}
    \begin{subfigure}[b]{0.25\textwidth} % 调整宽度占比
        \centering
        \includegraphics[height=3.5cm]{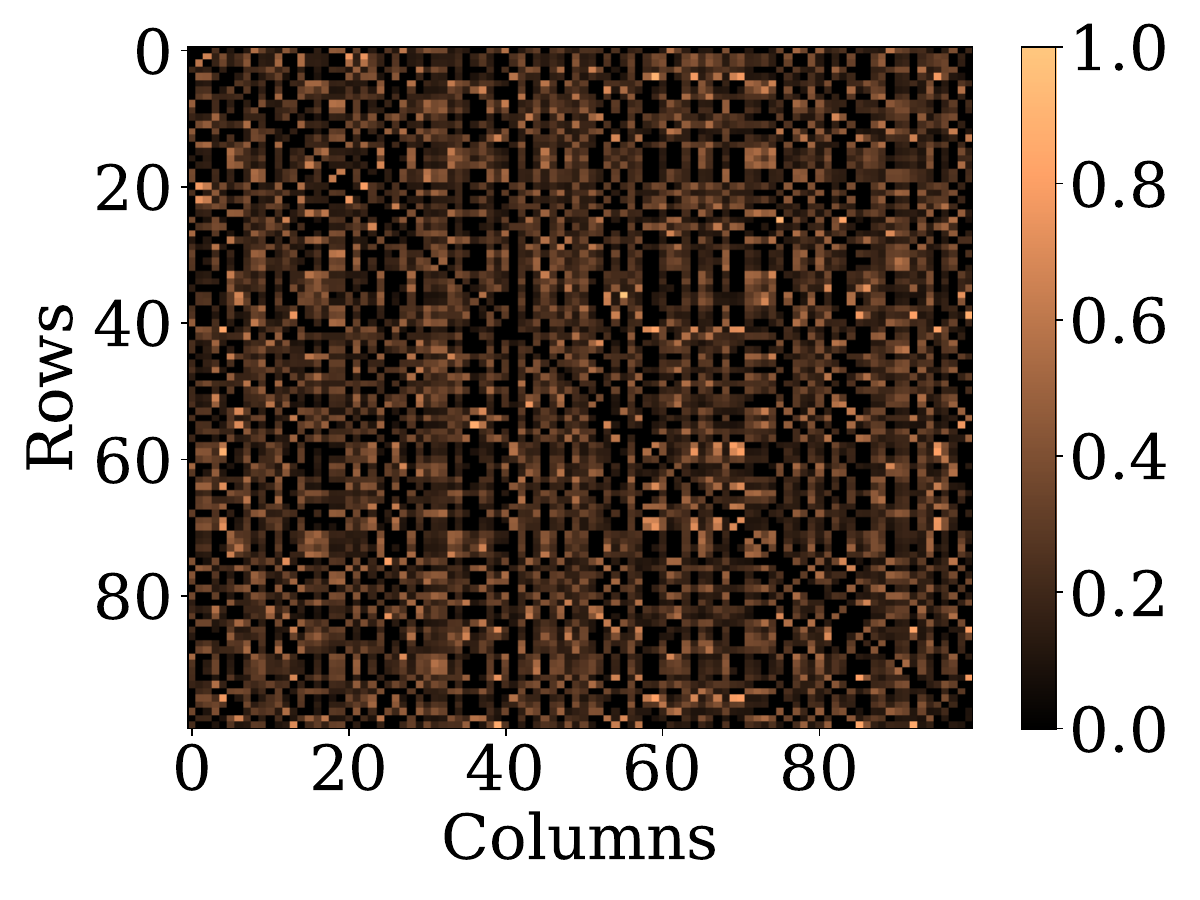} \\
        \vspace{-0.1in}
        \caption{MixScore by AGDN} % 子图标题
        \label{fig:sub2}
    \end{subfigure}
% \vspace{-0.1em}
    \caption{We compare structure information bwteen GatedGCN and AGDN. (a) is derived from GatedGCN, showing the sparsified result obtained through $k$-NN; while (b) is leaned by AGDN. AGDN exhibits better capability to capture the graph structure.}
    \label{fig:heatmap}
% \vspace{-0.4em}
\end{figure}

\begin{figure}[htbp]
\vspace{0.2in}
\setlength{\textfloatsep}{1pt}
    \centering
    \includegraphics[height=4.3cm]{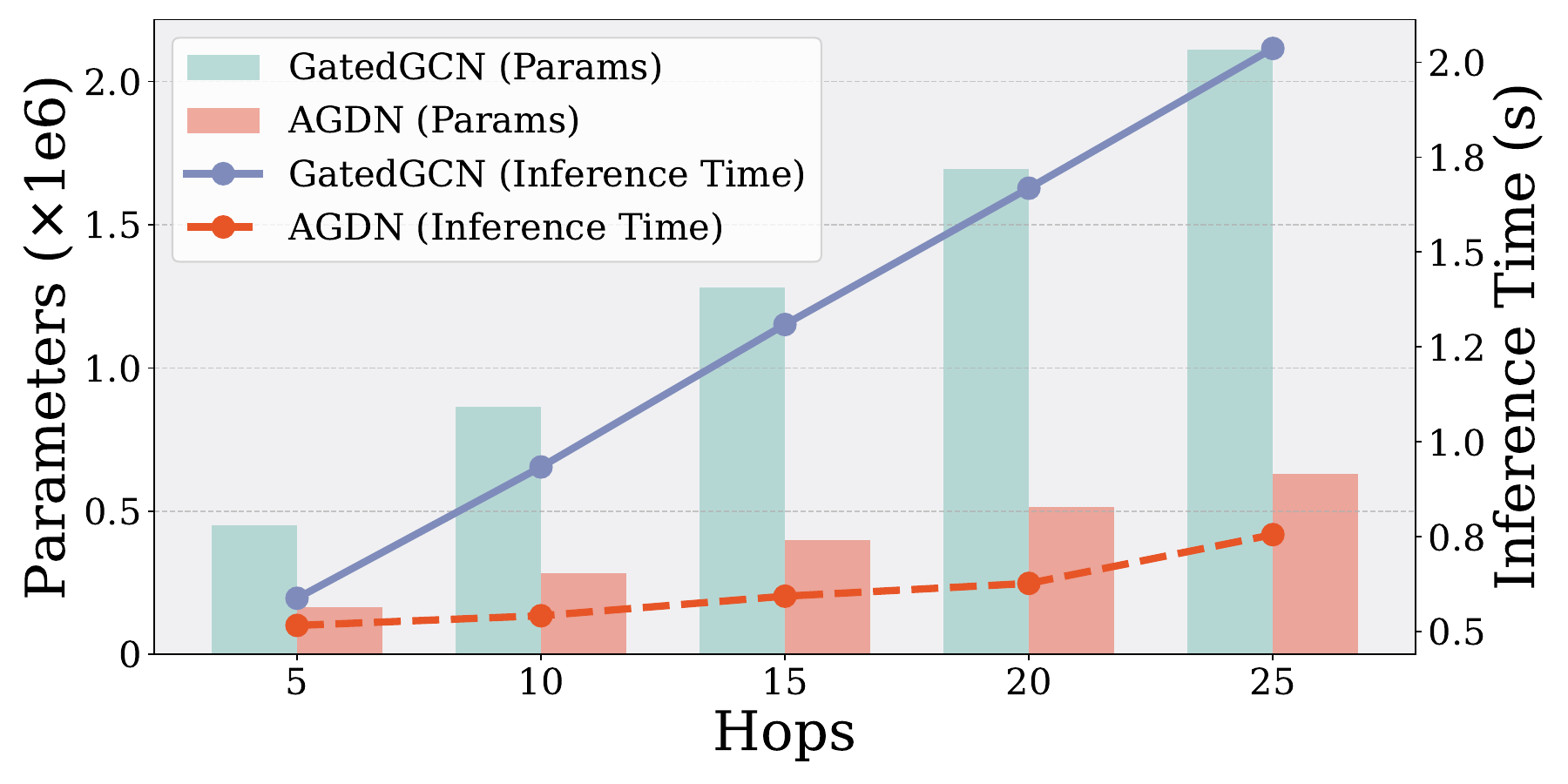} \\
    \vspace{-0.15in}
    \caption{Efficiency comparison between AGDN and GatedGCN under different number of hops for information aggregation. We observe that AGDN demonstrates clear superiority in both total number of parameters and inference time.}
    \label{fig:efficiency}
% \vspace*{-1cm}
\end{figure}

\subsection{Efficiency Evaluation for AGDN}
\label{efficiency_exp}

To answer \textbf{RQ2}, we conducted experiments to compare the efficiency between the proposed AGDN and one of the most representative graph learning-based baselines, GateGCN. We note that that one layer of GatedGCN and one diffusion step in the proposed AGDN enable them to capture information from one-hop farther away for each node in the TSP graph. To ensure fair comparison, we propose to compare efficiency on TSP-200 in cases where 5, 10, 15, 20, and 25 hops are enabled for such information aggregation to enforce the same receptive field, respectively. 

Specifically, we propose to evaluate the efficiency from two perspectives: The number of parameters ($\times 1\text{e}6$) and the inference time (seconds). From the experimental results shown in \cref{fig:efficiency}, we observe that the number of parameters in both models grows w.r.t. the number of hops. However, AGDN requires significantly fewer parameters compared to GatedGCN under the same number of hop (i.e., the same size of receptive field). This is because with the proposed design, AGDN can aggregate $k$-hop information within a single layer without increasing the parameter count.
%
% This is enabled by the use of a shared anisotropic diffusion mechanism that implicitly captures multi-hop dependencies without the need for additional learnable transformations.
%
In contrast, GatedGCN requires one more convolutional layer to aggregate information from one more hop away. Moreover, our inference time outperforms that of GatedGCN in most cases, and such an advantage becomes more significant when the hop number becomes larger. This shows that AGDN is more suitable for solving TSP problems requiring information from farther away for each node. This experiment strongly demonstrates that AGDN achieves a consistently better efficiency compared to the widely used GatedGCN.

% Since AGDN uses far fewer layers, it avoids many computationally intensive operations such as aggregation, normalization, activation, and residual computation, enabling faster aggregation of the same $k$-hop neighborhood information. In addition, we observed that GatedGCN requires each layer to explicitly compute edge embeddings, which is an exponential operation. This results in significant memory usage. During the training phase, AGDN is approximately 6 times faster (at 25 hops) and consumes substantially less memory compared to GatedGCN. Therefore, this experiment strongly demonstrates that AGDN achieves a consistently higher efficiency compared to GatedGCN.

% Therefore, to aggregate $n$-order neighbors, it explicitly requires $n$ convolutional layers, which results in lower efficiency. 

%%%%%%%%%% Different Distribution %%%%%%%%%%
\begin{table}
% \centering
\setlength{\tabcolsep}{8.5pt} % 调整列间距以适应单栏
\caption{Performance comparison under different node distributions.}
\vspace{-0.2cm}
\label{tab:diff-dist}
\begin{sc}
\begin{center}
\small
\centering
\begin{tabular}{l|cc|cc}
\toprule
Distri &\multicolumn{2}{c|}{Uniform} &\multicolumn{2}{c}{Cluster}\\
\midrule
{Method}&{TSP-100}&{TSP-1000}&{TSP-100}&{TSP-1000}\\
% optimal
\midrule
LEHD
&0.57\%&2.76\%&4.51\%&13.7\%
\\
BQ-NCO
&5.90\%&3.91\%&8.86\%&19.2\%
\\
INViT-2V
&1.65\%&6.15\%&3.12\%&9.32\%
\\
INViT-3V
&0.95\%&5.99\%&2.47\%&8.63\%
\\
GCN-MCT
&0.00\%&2.04\%&0.01\%&2.10\%
\\
% Our
\midrule
\textbf{\method}
&\textbf{0.00\%}&\textbf{1.44\%}&\textbf{0.01\%}&\textbf{2.17\%}
\\
\bottomrule

Distri &\multicolumn{2}{c|}{Explosion} &\multicolumn{2}{c}{Implosion}\\
\midrule
{Method}&{TSP-100}&{TSP-1000}&{TSP-100}&{TSP-1000}\\
\midrule
LEHD
&0.68\%&5.99\%&1.17\%&4.25\%
\\
BQ-NCO
&6.41\%&7.21\%&6.40\%&5.43\%
\\
INViT-2V
&1.85\%&9.11\%&1.95\%&6.63\%
\\
INViT-3V
&1.12\%&8.57\%&1.12\%&6.35\%
\\
GCN-MCT
&0.00\%&2.87\%&0.00\%&3.18\%
\\
% Our
\midrule
\textbf{\method}
&\textbf{0.00\%}&\textbf{2.60\%}&\textbf{0.00\%}&\textbf{2.47\%}
\\
\bottomrule

\end{tabular}
\end{center}
\end{sc}

% \vspace{-0.1in}

\end{table}

\begin{table}[htbp]
% \vspace{-0.1in}
\caption{Experimental Results on the Real-World Dataset TSPLIB}
\vspace{-0.05in}
\label{tab:tsplib}
\centering
\small
\begin{tabular}{l@{\hskip 0.02in}|c@{\hskip 0.04in}c@{\hskip 0.02in}|c@{\hskip 0.04in}c@{\hskip 0.02in}|c@{\hskip 0.04in}c}
\toprule
\multirow{2}{*}{\textsc{Method}}
& \multicolumn{2}{c|}{{tsp225}} 
& \multicolumn{2}{c|}{{pcb442}} 
& \multicolumn{2}{c}{{pr1002}} \\
& \textsc{Length} & \textsc{Gap} 
& \textsc{Length} & \textsc{Gap} 
& \textsc{Length} & \textsc{Gap}  \\
\midrule
\textsc{Concorde}    & 8.22 & 0.00\% & 13.4 & 0.00\% & 16.4 & 0.00\% \\
\midrule
\textsc{GatedGCN}    & 8.26 & 0.49\% & 13.5 & 1.15\% & 17.1 & 4.25\% \\
\textbf{\method}        & \textbf{8.25} & \textbf{0.31\%} & \textbf{13.5} & \textbf{0.75\%} & \textbf{16.8} & \textbf{2.76\%} \\
\bottomrule
\end{tabular}
% \vspace*{-1\baselineskip}
\vspace{-0.1in}
\end{table}

\subsection{Evaluation of Generalization Ability}

% To answer \textbf{RQ3}, in this subsection, we evaluate the models' generalization ability on TSPs with different problem sizes (i.e., different total number of nodes) and node distributions, both of which are widely acknowledged as critical~\citep{fang2024invit, luo2023lehd, drakulic2024bq}.
To answer \textbf{RQ3}, in this subsection, we evaluate the models’ generalization ability across three dimensions. First, we examine performance on TSPs with varying problem sizes (i.e., different total numbers of nodes) and node distributions, both of which are widely acknowledged as critical factors for assessing generalization~\citep{fang2024invit, luo2023lehd, drakulic2024bq}. Additionally, we assess the transferability of the models to real-world TSP instances drawn from the TSPLib~\citep{tsplib}, which pose greater structural and distributional complexity than synthetic data and have been widely used in several related works~\cite{goh2024hierarchical,fang2024invit}. This comprehensive evaluation allows us to rigorously test the robustness and scalability of the proposed AGDN framework in both synthetic and practical scenarios.

% This issue has been widely discussed~\citep{fang2024invit, luo2023lehd, drakulic2024bq}, as directly solving large-scale TSP problems is extremely challenging. Using an affordable model to handle tasks of arbitrary sizes and distributions is a common requirement in real-world applications.

% the effectiveness of AGDN has been validated across all TSP sizes. We attribute this to the model’s ability to capture more informative graph structure information. 

\noindent
\textbf{Generalization across Different Problem Sizes.}
We conduct performance comparison by training the learning-based models on TSP-100 and testing them on TSP-200, TSP-500, and TSP-1000, and we present the quantitative results in \cref{tab:generalization}. We observe that the proposed AGDN model consistently shows superior performance in finding high-quality routes (measured by the commonly used \textsc{Gap} metric) compared with the widely used GateGCN model. 
% This reveals AGDN's strengthened ability to capture more informative graph structure information when generalizing across different graph sizes. 
%
Notably, we observed a remarkable improvement of 89\% specifically on TSP-1000, highlighting the effectiveness of AGDN in being generalized to large-scale TSPs. These findings demonstrate that AGDN exhibits superior generalization capabilities. Meanwhile, in terms of efficiency, we observe AGDN shows consistent superiority as discussed in Section~\ref{efficiency_exp}, which further consolidates the advantages of our proposed method.

% Notably, INVIT~\citep{fang2024invit} proposed using rescaling to achieve invariance across different distributions and sizes. 
\noindent
\textbf{Generalization across Different Node Distributions.}
We examine the performance of AGDN on problems with different node distributions in TSP-100 and TSP-1000. Here we focus on comparing the performance on the \textsc{Gap} metric, since it directly reflects the quality of the route identified by the model. We compared the state-of-the-art models across various distributions~\citep{luo2023lehd, drakulic2024bq}. From \cref{tab:diff-dist}, we have observations below. (1) Uniform Distribution: When nodes are evenly distributed across the space, AGDN achieves the best results, validating consistent superiority derived in our previous observations. (2) Cluster Distribution: When nodes are grouped into several dense regions, AGDN remains one of the most competitive models across all baselines. (3) Explosion Distribution: When all nodes are distributed outward toward the periphery, AGDN achieves the best performance in both cases. This can be attributed to the model’s ability to capture and effectively propagate global information. (4) Implosion Distribution: When a subset of nodes is concentrated at a central point while the rest are distributed outward, AGDN continues achieving the best results. The evaluation demonstrates the outstanding ability of the proposed AGDN to consistently exhibit superiority under different node distributions.

\noindent
\textbf{Generalization on Real-World Dataset.}
We further evaluate the generalization ability of our proposed AGDN using real-world datasets from TSPLib~\cite{tsplib}. Specifically, we select three representative TSP instances: tsp225, pcb442, and pr1002, which contain 225, 442, and 1002 nodes, respectively. We compare our method against the baseline model GatedGCN. Both models are trained on the TSP-100 dataset and then evaluated on each of the real-world TSP instances. We report two metrics: \textsc{Length}, which denotes the total length of the predicted shortest path, and \textsc{GAP}, which measures the relative difference between the predicted path length and the known optimal solution. The results are summarized in Table~\ref{tab:tsplib}. As shown in the table, our AGDN method consistently outperforms GatedGCN across all real-world datasets. This experiment provides strong evidence of the superior generalization ability of AGDN, not only does it achieve excellent performance on synthetic data, but it also delivers state-of-the-art results on real-world TSP problems.

\section{Related Works}
\noindent
\textbf{Graph Learning with Diffusion}. Diffusion~\citep{yang2023diffusion,xu2025autostdiff,xu2026synhat} in graph learning~\citep{kipf2016gcn,li2025intellectual,zhao2026graphip,wang2025cega,cheng2025atom,yu2025uqgnn,yu2026health} has been explored through various theoretical frameworks. These approaches can be broadly categorized into two classes based on their mathematical foundation: Spectral-based~\citep{chung1997spectral-graph-theory} and partial/ordinary differential equations(PDE/ODE)-based~\citep{evans2022pde, hartman2002ode} diffusion methods. (1) \textit{Spectral-based Diffusion Methods} leverage the principles of spectral graph theory to enhance the learning process. A prominent example is Graph Diffusion Convolution (GDC)~\citep{gasteiger2019diffusion}, which improves graph learning by introducing a generalized graph convolution. GDC incorporates diffusion methods such as the heat kernel~\citep{davies1989heat-kernel-1, grigoryan2009heat-kernel-2} and personalized PageRank~\citep{langville2004pagerank-1, gasteiger2018pagerank-2} to overcome the limitations of noisy and arbitrarily defined edges in graphs. (2) \textit{PDE/ODE-based Diffusion Methods}: Another line of research leverages mathematical frameworks such as partial differential equations (PDEs)~\citep{evans2022pde} and ordinary differential equations (ODEs)~\citep{hartman2002ode}, to model graph diffusion, treating graph learning as a continuous process~\citep{barton1994continue-process}. Notable examples include Graph Neural Diffusion (GRAND)\citep{chamberlain2021grand}, which utilizes heat equation to describe the temporal evolution of graph signals, and Graph Neural Ordinary Differential Equations (GDE)\citep{poli2019gde}, where the relationship between inputs and outputs is governed by a continuous-time dynamical system, capturing the evolution of graph signals through ordinary differential equations.

\noindent
\textbf{Solving TSPs with Graph Learning}. Solving TSP with graph learning has become a prevalent approach, where three mainstreams of methods are commonly adopted.
(1) \textit{Supervised Learning}: The use of GatedGCN to construct a heatmap to find route from node embeddings was first introduced by \cite{joshi2019efficient}, and numerous subsequent efforts explored graph-based approaches. For example, \cite{fu2021generalize} demonstrated the ability to generalize across TSP instances of arbitrary sizes, removing constraints imposed by the training dataset size. \cite{sun2023difusco} introduced a novel approach using graph diffusion to denoise and generate high-quality solutions. (2) \textit{Reinforcement Learning}: Beyond supervised learning, reinforcement learning frameworks have effectively incorporated graphs. DIMES employed REINFORCE~\citep{williams1992reinforce} to train a GNN, yielding notable performance improvements. (3) \textit{Unsupervised Learning}: \cite{min2024utsp} pioneered the integration of graph models into an unsupervised learning framework, achieving state-of-the-art results without relying on labeled data. These advancements highlight the versatility and potential of graph learning in addressing TSP challenges.
Despite these explorations, they generally overlook the lack of an informative topological prior and fail to avoid the limitation of lacking node information exchange brought by the graph sparsification mechanism. Different from these works, AGDN achieves an informative topological structure without compromising to the limitation of sparsification.

\section{Conclusion}
In this paper, we propose the Anisotropic Graph Diffusion Network (AGDN), a novel framework designed to address the challenges faced by current GNN-based models in solving the Traveling Salesman Problem (TSP). Specifically, we have identified two key issues that current GNNs encounter when addressing TSP, i.e., lacking informative topological prior and information exchange between nodes in the TSP graphs. To address these issues, we propose a novel framework named AGDN integrated with a MixScore transition matrix that encodes both node similarity and pairwise distance information, along with an anisotropic graph diffusion mechanism that enables efficient multi-hop information propagation.
Experiments showed AGDN significantly outperformed existing methods across different problem sizes, diverse node distributions and real-world datasets while maintaining strong generalization capabilities. The results validate AGDN's effectiveness in both supervised and unsupervised learning settings, demonstrating superiority over current state-of-the-art approaches.

% . (1) Lack of informative topological prior: A fully connected TSP graph lacks explicit topological information. A common approach to address this is to sparsify the TSP graph to reduce the search space. (2) Lack of information exchange: Sparsifying the TSP graph may result in disconnected components, which hinders the exchange of information between these components. To address these issues, we propose 

% the use of a MixScore transition matrix, which captures graph structural information by simultaneously considering node similarity and pairwise distance. Additionally, we introduce Anisotropic Graph Diffusion with multi-hop attention to effectively propagate graph features. Extensive experiments validate the effectiveness of our model, and also demonstrate strong generalization capabilities across different graph sizes and distributions. At the same time, we suggest two potential directions for future research. First, AGDN could be applied to other problems, such as the Asymmetric Traveling Salesman Problem (ATSP)~\citep{oncan2009atsp}. Additionally, it could be explored in traditional graph learning tasks (e.g., node classification~\citep{bhagat2011node-classi} and edge prediction~\citep{hasan2011link-pred}).

\section*{Acknowledgments}
This research is supported by the National Research Foundation, Singapore under its AI Singapore Programme (AISG Award No: AISG3-RP-2022-031). We would also like to thank the anonymous reviewers for their constructive feedback.

% balance references
\makeatletter
\let\oldbibitem\bibitem
\newcounter{bibitemcounter}

\def\bibitem{%
  \@ifnextchar[{\bibitem@opt}{\bibitem@noopt}%
}

\def\bibitem@opt[#1]#2{%
  \stepcounter{bibitemcounter}%
  \ifnum\value{bibitemcounter}=59
    \vfill\eject
  \fi
  \oldbibitem[#1]{#2}%
}

\def\bibitem@noopt#1{%
  \stepcounter{bibitemcounter}%
  \ifnum\value{bibitemcounter}=59
    \vfill\eject
  \fi
  \oldbibitem{#1}%
}
\makeatother

%% The next two lines define the bibliography style to be used, and
%% the bibliography file.
\bibliographystyle{ACM-Reference-Format}
\bibliography{references}

@article{kwon2020pomo,
  title={Pomo: Policy optimization with multiple optima for reinforcement learning},
  author={Kwon, Yeong-Dae and Choo, Jinho and Kim, Byoungjip and Yoon, Iljoo and Gwon, Youngjune and Min, Seungjai},
  journal={Advances in Neural Information Processing Systems},
  volume={33},
  pages={21188--21198},
  year={2020}
}

@article{kipf2016gcn,
  title={Semi-supervised classification with graph convolutional networks},
  author={Kipf, Thomas N and Welling, Max},
  journal={arXiv preprint arXiv:1609.02907},
  year={2016}
}

@article{velivckovic2017gat,
  title={Graph attention networks},
  author={Veli{\v{c}}kovi{\'c}, Petar and Cucurull, Guillem and Casanova, Arantxa and Romero, Adriana and Lio, Pietro and Bengio, Yoshua},
  journal={arXiv preprint arXiv:1710.10903},
  year={2017}
}

@article{lischka2024less,
  title={Less Is More-On the Importance of Sparsification for Transformers and Graph Neural Networks for TSP},
  author={Lischka, Attila and Wu, Jiaming and Basso, Rafael and Chehreghani, Morteza Haghir and Kulcs{\'a}r, Bal{\'a}zs},
  journal={arXiv preprint arXiv:2403.17159},
  year={2024}
}

@article{kool2018attention,
  title={Attention, learn to solve routing problems!},
  author={Kool, Wouter and Van Hoof, Herke and Welling, Max},
  journal={arXiv preprint arXiv:1803.08475},
  year={2018}
}

@article{joshi2019efficient,
  title={An efficient graph convolutional network technique for the travelling salesman problem},
  author={Joshi, Chaitanya K and Laurent, Thomas and Bresson, Xavier},
  journal={arXiv preprint arXiv:1906.01227},
  year={2019}
}

@inproceedings{fu2021generalize,
  title={Generalize a small pre-trained model to arbitrarily large tsp instances},
  author={Fu, Zhang-Hua and Qiu, Kai-Bin and Zha, Hongyuan},
  booktitle={Proceedings of the AAAI conference on artificial intelligence},
  volume={35},
  number={8},
  pages={7474--7482},
  year={2021}
}

@article{wang2020multi,
  title={Multi-hop attention graph neural network},
  author={Wang, Guangtao and Ying, Rex and Huang, Jing and Leskovec, Jure},
  journal={arXiv preprint arXiv:2009.14332},
  year={2020}
}

@article{applegate2009concorde,
  title={Certification of an optimal TSP tour through 85,900 cities},
  author={Applegate, David L and Bixby, Robert E and Chv{\'a}tal, Va{\v{s}}ek and Cook, William and Espinoza, Daniel G and Goycoolea, Marcos and Helsgaun, Keld},
  journal={Operations Research Letters},
  volume={37},
  number={1},
  pages={11--15},
  year={2009},
  publisher={Elsevier}
}

@article{helsgaun2017lkh,
  title={An extension of the Lin-Kernighan-Helsgaun TSP solver for constrained traveling salesman and vehicle routing problems},
  author={Helsgaun, Keld},
  journal={Roskilde: Roskilde University},
  volume={12},
  pages={966--980},
  year={2017}
}

@article{qiu2022dimes,
  title={Dimes: A differentiable meta solver for combinatorial optimization problems},
  author={Qiu, Ruizhong and Sun, Zhiqing and Yang, Yiming},
  journal={Advances in Neural Information Processing Systems},
  volume={35},
  pages={25531--25546},
  year={2022}
}

@article{sun2023difusco,
  title={Difusco: Graph-based diffusion solvers for combinatorial optimization},
  author={Sun, Zhiqing and Yang, Yiming},
  journal={Advances in Neural Information Processing Systems},
  volume={36},
  pages={3706--3731},
  year={2023}
}

@article{min2024utsp,
  title={Unsupervised learning for solving the travelling salesman problem},
  author={Min, Yimeng and Bai, Yiwei and Gomes, Carla P},
  journal={Advances in Neural Information Processing Systems},
  volume={36},
  year={2024}
}

@article{alon2020bottleneck,
  title={On the bottleneck of graph neural networks and its practical implications},
  author={Alon, Uri and Yahav, Eran},
  journal={arXiv preprint arXiv:2006.05205},
  year={2020}
}

@article{gasteiger2019diffusion,
  title={Diffusion improves graph learning},
  author={Gasteiger, Johannes and Wei{\ss}enberger, Stefan and G{\"u}nnemann, Stephan},
  journal={Advances in neural information processing systems},
  volume={32},
  year={2019}
}

@inproceedings{beaini2021directional,
  title={Directional graph networks},
  author={Beaini, Dominique and Passaro, Saro and L{\'e}tourneau, Vincent and Hamilton, Will and Corso, Gabriele and Li{\`o}, Pietro},
  booktitle={International Conference on Machine Learning},
  pages={748--758},
  year={2021},
  organization={PMLR}
}

@article{clarke1964vrp,
  title={Scheduling of vehicles from a central depot to a number of delivery points},
  author={Clarke, Geoff and Wright, John W},
  journal={Operations research},
  volume={12},
  number={4},
  pages={568--581},
  year={1964},
  publisher={Informs}
}

@article{kirkpatrick1983circuitdesign,
  title={Optimization by simulated annealing},
  author={Kirkpatrick, Scott and Gelatt Jr, C Daniel and Vecchi, Mario P},
  journal={science},
  volume={220},
  number={4598},
  pages={671--680},
  year={1983},
  publisher={American association for the advancement of science}
}

@inproceedings{abu2019mixhop,
  title={Mixhop: Higher-order graph convolutional architectures via sparsified neighborhood mixing},
  author={Abu-El-Haija, Sami and Perozzi, Bryan and Kapoor, Amol and Alipourfard, Nazanin and Lerman, Kristina and Harutyunyan, Hrayr and Ver Steeg, Greg and Galstyan, Aram},
  booktitle={international conference on machine learning},
  pages={21--29},
  year={2019},
  organization={PMLR}
}

@article{xu2018gin,
  title={How powerful are graph neural networks?},
  author={Xu, Keyulu and Hu, Weihua and Leskovec, Jure and Jegelka, Stefanie},
  journal={arXiv preprint arXiv:1810.00826},
  year={2018}
}

@article{xin2021neurolkh,
  title={Neurolkh: Combining deep learning model with lin-kernighan-helsgaun heuristic for solving the traveling salesman problem},
  author={Xin, Liang and Song, Wen and Cao, Zhiguang and Zhang, Jie},
  journal={Advances in Neural Information Processing Systems},
  volume={34},
  pages={7472--7483},
  year={2021}
}

@article{ozden2017timeconsume,
  title={Solving large batches of traveling salesman problems with parallel and distributed computing},
  author={Ozden, SG and Smith, Alice E and Gue, Kevin R},
  journal={Computers \& Operations Research},
  volume={85},
  pages={87--96},
  year={2017},
  publisher={Elsevier}
}

@article{chen2024finetune,
  title={Efficient meta neural heuristic for multi-objective combinatorial optimization},
  author={Chen, Jinbiao and Wang, Jiahai and Zhang, Zizhen and Cao, Zhiguang and Ye, Te and Chen, Siyuan},
  journal={Advances in Neural Information Processing Systems},
  volume={36},
  year={2024}
}

@article{luo2023lehd,
  title={Neural combinatorial optimization with heavy decoder: Toward large scale generalization},
  author={Luo, Fu and Lin, Xi and Liu, Fei and Zhang, Qingfu and Wang, Zhenkun},
  journal={Advances in Neural Information Processing Systems},
  volume={36},
  pages={8845--8864},
  year={2023}
}

@article{fang2024invit,
  title={INViT: A Generalizable Routing Problem Solver with Invariant Nested View Transformer},
  author={Fang, Han and Song, Zhihao and Weng, Paul and Ban, Yutong},
  journal={arXiv preprint arXiv:2402.02317},
  year={2024}
}

@article{rusch2023oversmooth-1,
  title={A survey on oversmoothing in graph neural networks},
  author={Rusch, T Konstantin and Bronstein, Michael M and Mishra, Siddhartha},
  journal={arXiv preprint arXiv:2303.10993},
  year={2023}
}

@inproceedings{chen2020oversmooth-2,
  title={Measuring and relieving the over-smoothing problem for graph neural networks from the topological view},
  author={Chen, Deli and Lin, Yankai and Li, Wei and Li, Peng and Zhou, Jie and Sun, Xu},
  booktitle={Proceedings of the AAAI conference on artificial intelligence},
  volume={34},
  number={04},
  pages={3438--3445},
  year={2020}
}

@article{xi2018stochastic,
  title={Linear convergence in optimization over directed graphs with row-stochastic matrices},
  author={Xi, Chenguang and Mai, Van Sy and Xin, Ran and Abed, Eyad H and Khan, Usman A},
  journal={IEEE Transactions on Automatic Control},
  volume={63},
  number={10},
  pages={3558--3565},
  year={2018},
  publisher={IEEE}
}

@inproceedings{mcglohon2008disconnect,
  title={Weighted graphs and disconnected components: patterns and a generator},
  author={McGlohon, Mary and Akoglu, Leman and Faloutsos, Christos},
  booktitle={Proceedings of the 14th ACM SIGKDD international conference on Knowledge discovery and data mining},
  pages={524--532},
  year={2008}
}

@article{vaswani2017attention,
  title={Attention is all you need},
  author={Vaswani, A},
  journal={Advances in Neural Information Processing Systems},
  year={2017}
}

@book{korte2011cop,
  title={Combinatorial optimization},
  author={Korte, Bernhard H and Vygen, Jens and Korte, B and Vygen, J},
  volume={1},
  year={2011},
  publisher={Springer}
}

@article{williams1992reinforce,
  title={Simple statistical gradient-following algorithms for connectionist reinforcement learning},
  author={Williams, Ronald J},
  journal={Machine learning},
  volume={8},
  pages={229--256},
  year={1992},
  publisher={Springer}
}

@article{xia2024position,
  title={Position: Rethinking Post-Hoc Search-Based Neural Approaches for Solving Large-Scale Traveling Salesman Problems},
  author={Xia, Yifan and Yang, Xianliang and Liu, Zichuan and Liu, Zhihao and Song, Lei and Bian, Jiang},
  journal={arXiv preprint arXiv:2406.03503},
  year={2024}
}

@book{hillier2015or,
  title={Introduction to operations research},
  author={Hillier, Frederick S and Lieberman, Gerald J},
  year={2015},
  publisher={McGraw-Hill}
}

@article{drakulic2024bq,
  title={Bq-nco: Bisimulation quotienting for efficient neural combinatorial optimization},
  author={Drakulic, Darko and Michel, Sofia and Mai, Florian and Sors, Arnaud and Andreoli, Jean-Marc},
  journal={Advances in Neural Information Processing Systems},
  volume={36},
  year={2024}
}

@book{evans2022pde,
  title={Partial differential equations},
  author={Evans, Lawrence C},
  volume={19},
  year={2022},
  publisher={American Mathematical Society}
}

@book{hartman2002ode,
  title={Ordinary differential equations},
  author={Hartman, Philip},
  year={2002},
  publisher={SIAM}
}

@book{chung1997spectral-graph-theory,
  title={Spectral graph theory},
  author={Chung, Fan RK},
  volume={92},
  year={1997},
  publisher={American Mathematical Soc.}
}

@inproceedings{chamberlain2021grand,
  title={Grand: Graph neural diffusion},
  author={Chamberlain, Ben and Rowbottom, James and Gorinova, Maria I and Bronstein, Michael and Webb, Stefan and Rossi, Emanuele},
  booktitle={International conference on machine learning},
  pages={1407--1418},
  year={2021},
  organization={PMLR}
}

@article{poli2019gde,
  title={Graph neural ordinary differential equations},
  author={Poli, Michael and Massaroli, Stefano and Park, Junyoung and Yamashita, Atsushi and Asama, Hajime and Park, Jinkyoo},
  journal={arXiv preprint arXiv:1911.07532},
  year={2019}
}

@book{davies1989heat-kernel-1,
  title={Heat kernels and spectral theory},
  author={Davies, Edward Brian},
  number={92},
  year={1989},
  publisher={Cambridge university press}
}

@book{grigoryan2009heat-kernel-2,
  title={Heat kernel and analysis on manifolds},
  author={Grigoryan, Alexander},
  volume={47},
  year={2009},
  publisher={American Mathematical Soc.}
}

@article{langville2004pagerank-1,
  title={Deeper inside pagerank},
  author={Langville, Amy N and Meyer, Carl D},
  journal={Internet Mathematics},
  volume={1},
  number={3},
  pages={335--380},
  year={2004},
  publisher={Taylor \& Francis}
}

@article{gasteiger2018pagerank-2,
  title={Predict then propagate: Graph neural networks meet personalized pagerank},
  author={Gasteiger, Johannes and Bojchevski, Aleksandar and G{\"u}nnemann, Stephan},
  journal={arXiv preprint arXiv:1810.05997},
  year={2018}
}

@article{barton1994continue-process,
  title={Modeling of combined discrete/continuous processes},
  author={Barton, Paul I and Pantelides, Constantinos C},
  journal={AIChE journal},
  volume={40},
  number={6},
  pages={966--979},
  year={1994},
  publisher={Wiley Online Library}
}

@article{browne2012mcts,
  title={A survey of monte carlo tree search methods},
  author={Browne, Cameron B and Powley, Edward and Whitehouse, Daniel and Lucas, Simon M and Cowling, Peter I and Rohlfshagen, Philipp and Tavener, Stephen and Perez, Diego and Samothrakis, Spyridon and Colton, Simon},
  journal={IEEE Transactions on Computational Intelligence and AI in games},
  volume={4},
  number={1},
  pages={1--43},
  year={2012},
  publisher={IEEE}
}

@article{hamilton2017graphsage,
  title={Inductive representation learning on large graphs},
  author={Hamilton, Will and Ying, Zhitao and Leskovec, Jure},
  journal={Advances in neural information processing systems},
  volume={30},
  year={2017}
}

@article{cai2020oversmoothing-9,
  title={A note on over-smoothing for graph neural networks},
  author={Cai, Chen and Wang, Yusu},
  journal={arXiv preprint arXiv:2006.13318},
  year={2020}
}

@article{rusch2023oversmoothing-8,
  title={A survey on oversmoothing in graph neural networks},
  author={Rusch, T Konstantin and Bronstein, Michael M and Mishra, Siddhartha},
  journal={arXiv preprint arXiv:2303.10993},
  year={2023}
}

@article{topping2021oversquashing-1,
  title={Understanding over-squashing and bottlenecks on graphs via curvature},
  author={Topping, Jake and Di Giovanni, Francesco and Chamberlain, Benjamin Paul and Dong, Xiaowen and Bronstein, Michael M},
  journal={arXiv preprint arXiv:2111.14522},
  year={2021}
}

@inproceedings{di2023oversquashing-2,
  title={On over-squashing in message passing neural networks: The impact of width, depth, and topology},
  author={Di Giovanni, Francesco and Giusti, Lorenzo and Barbero, Federico and Luise, Giulia and Lio, Pietro and Bronstein, Michael M},
  booktitle={International Conference on Machine Learning},
  pages={7865--7885},
  year={2023},
  organization={PMLR}
}

@inproceedings{giraldo2023oversquashing-3,
  title={On the trade-off between over-smoothing and over-squashing in deep graph neural networks},
  author={Giraldo, Jhony H and Skianis, Konstantinos and Bouwmans, Thierry and Malliaros, Fragkiskos D},
  booktitle={Proceedings of the 32nd ACM international conference on information and knowledge management},
  pages={566--576},
  year={2023}
}

@misc{tsplib,
  howpublished = {\url{https://www.math.uwaterloo.ca/tsp}},
  year         = {n.d.}
}

@inproceedings{goh2024hierarchical,
  title={Hierarchical neural constructive solver for real-world tsp scenarios},
  author={Goh, Yong Liang and Cao, Zhiguang and Ma, Yining and Dong, Yanfei and Dupty, Mohammed Haroon and Lee, Wee Sun},
  booktitle={Proceedings of the 30th ACM SIGKDD Conference on Knowledge Discovery and Data Mining},
  pages={884--895},
  year={2024}
}

@article{englert2014worst,
  title={Worst case and probabilistic analysis of the 2-Opt algorithm for the TSP},
  author={Englert, Matthias and R{\"o}glin, Heiko and V{\"o}cking, Berthold},
  journal={Algorithmica},
  volume={68},
  number={1},
  pages={190--264},
  year={2014},
  publisher={Springer}
}

@article{babin2007improvements,
  title={Improvements to the Or-opt heuristic for the symmetric travelling salesman problem},
  author={Babin, Gilbert and Deneault, St{\'e}phanie and Laporte, Gilbert},
  journal={Journal of the Operational Research Society},
  volume={58},
  number={3},
  pages={402--407},
  year={2007},
  publisher={Taylor \& Francis}
}

@article{xiao2024nar4tsp,
  title={Reinforcement learning-based nonautoregressive solver for traveling salesman problems},
  author={Xiao, Yubin and Wang, Di and Li, Boyang and Chen, Huanhuan and Pang, Wei and Wu, Xuan and Li, Hao and Xu, Dong and Liang, Yanchun and Zhou, You},
  journal={IEEE Transactions on Neural Networks and Learning Systems},
  year={2024},
  publisher={IEEE}
}

@inproceedings{jin2023pointerformer,
  title={Pointerformer: Deep reinforced multi-pointer transformer for the traveling salesman problem},
  author={Jin, Yan and Ding, Yuandong and Pan, Xuanhao and He, Kun and Zhao, Li and Qin, Tao and Song, Lei and Bian, Jiang},
  booktitle={Proceedings of the AAAI Conference on Artificial Intelligence},
  volume={37},
  number={7},
  pages={8132--8140},
  year={2023}
}

@article{gao2023elg,
  title={Towards generalizable neural solvers for vehicle routing problems via ensemble with transferrable local policy},
  author={Gao, Chengrui and Shang, Haopu and Xue, Ke and Li, Dong and Qian, Chao},
  journal={arXiv preprint arXiv:2308.14104},
  year={2023}
}

@inproceedings{wang2025deitsp,
author = {Wang, Mingzhao and Zhou, You and Cao, Zhiguang and Xiao, Yubin and Wu, Xuan and Pang, Wei and Jiang, Yuan and Yang, Hui and Zhao, Peng and Li, Yuanshu},
title = {An Efficient Diffusion-based Non-Autoregressive Solver for Traveling Salesman Problem},
year = {2025},
booktitle = {Proceedings of the 31st ACM SIGKDD Conference on Knowledge Discovery and Data Mining V.1},
pages = {1469–1480}
}

@article{li2023t2t,
  title={T2t: From distribution learning in training to gradient search in testing for combinatorial optimization},
  author={Li, Yang and Guo, Jinpei and Wang, Runzhong and Yan, Junchi},
  journal={Advances in Neural Information Processing Systems},
  volume={36},
  pages={50020--50040},
  year={2023}
}

@article{zhang2026hybrid,
  title={Hybrid-balance gflownet for solving vehicle routing problems},
  author={Zhang, Ni and Cao, Zhiguang},
  journal={Advances in Neural Information Processing Systems},
  volume={38},
  pages={16797--16821},
  year={2026}
}

@inproceedings{zhang2025adversarial,
  title={Adversarial generative flow network for solving vehicle routing problems},
  author={Zhang, Ni and Yang, Jingfeng and Cao, Zhiguang and Chi, Xu},
  booktitle={International conference on learning representations},
  year={2025}
}

@inproceedings{huang2025rethinking,
  title={Rethinking light decoder-based solvers for vehicle routing problems},
  author={Huang, Ziwei and Zhou, Jianan and Cao, Zhiguang and Xu, Yixin},
  booktitle={International conference on learning representations},
  year={2025}
}

@inproceedings{yi2026radar,
  title={RADAR: Learning to Route with Asymmetry-aware DistAnce Representations},
  author={Yi, Hang and Huang, Ziwei and Ma, Yining and Cao, Zhiguang},
  booktitle={International conference on learning representations},
  year={2026}
}

@article{yang2023diffusion,
  title={Diffusion models: A comprehensive survey of methods and applications},
  author={Yang, Ling and Zhang, Zhilong and Song, Yang and Hong, Shenda and Xu, Runsheng and Zhao, Yue and Zhang, Wentao and Cui, Bin and Yang, Ming-Hsuan},
  journal={ACM computing surveys},
  volume={56},
  number={4},
  pages={1--39},
  year={2023},
  publisher={ACM New York, NY, USA}
}

@inproceedings{cheng2025atom,
  title={Atom: A framework of detecting query-based model extraction attacks for graph neural networks},
  author={Cheng, Zhan and Shen, Bolin and Sha, Tianming and Gao, Yuan and Li, Shibo and Dong, Yushun},
  booktitle={Proceedings of the 31st ACM SIGKDD Conference on Knowledge Discovery and Data Mining V. 2},
  pages={322--333},
  year={2025}
}

@article{wang2025cega,
  title={Cega: A cost-effective approach for graph-based model extraction and acquisition},
  author={Wang, Zebin and Lin, Menghan and Shen, Bolin and Anderson, Ken and Liu, Molei and Cai, Tianxi and Dong, Yushun},
  journal={arXiv preprint arXiv:2506.17709},
  year={2025}
}

@article{li2025intellectual,
  title={Intellectual property in graph-based machine learning as a service: Attacks and defenses},
  author={Li, Lincan and Shen, Bolin and Zhao, Chenxi and Sun, Yuxiang and Zhao, Kaixiang and Pan, Shirui and Dong, Yushun},
  journal={arXiv preprint arXiv:2508.19641},
  year={2025}
}

@article{zhao2026graphip,
  title={GraphIP-Bench: How Hard Is It to Steal a Graph Neural Network, and Can We Stop It?},
  author={Zhao, Kaixiang and Shen, Bolin and Dai, Yuyang and Chakraborty, Shayok and Dong, Yushun},
  journal={arXiv preprint arXiv:2605.12827},
  year={2026}
}

@inproceedings{xu2025autostdiff,
  title={AutoSTDiff: Autoregressive Spatio-Temporal Denoising Diffusion Model for Asynchronous Trajectory Generation},
  author={Xu, Rongchao and Hong, Zhiqing and Wang, Guang},
  booktitle={Proceedings of the 2025 SIAM International Conference on Data Mining (SDM)},
  pages={538--547},
  year={2025},
  organization={SIAM}
}

@article{xu2026synhat,
  title={SynHAT: A Two-stage Coarse-to-Fine Diffusion Framework for Synthesizing Human Activity Traces},
  author={Xu, Rongchao and Jiang, Lin and Yu, Dahai and Li, Ximiao and Wang, Guang},
  journal={arXiv preprint arXiv:2604.14705},
  year={2026}
}

@inproceedings{yu2025uqgnn,
  title={UQGNN: Uncertainty Quantification of Graph Neural Networks for Multivariate Spatiotemporal Prediction},
  author={Yu, Dahai and Zhuang, Dingyi and Jiang, Lin and Xu, Rongchao and Ye, Xinyue and Bu, Yuheng and Wang, Shenhao and Wang, Guang},
  booktitle={Proceedings of the 33rd ACM International Conference on Advances in Geographic Information Systems},
  pages={52--65},
  year={2025}
}

@misc{yu2026health,
  title={HealthMamba: An Uncertainty-aware Spatiotemporal Graph State Space Model for Effective and Reliable Healthcare Facility Visit Prediction},
  author={Dahai Yu and Lin Jiang and Rongchao Xu and Guang Wang},
  journal={arXiv preprint arXiv:2602.05286},
  year={2026},
}

%% If your work has an appendix, this is the place to put it.

\appendix
\newpage
\section{Proof of Theoretical Results}
\label{appendix:proof}

% \begin{lemma}[Diffusion Distance~\citep{beaini2021directional}]
% Let $q_t(x, y)$ be the transition probability from node $x$ to node $y$ at time $t$ under the continuous-time diffusion process governed by the normalized graph Laplacian $L_{\text{norm}}$. Then,
% \(
% q_t = e^{-t L_{\text{norm}}}
% \)
% is called the \emph{continuous-time heat kernel}, and satisfies the heat equation:
% \(
% \frac{d}{dt} q_t = -L_{\text{norm}} q_t.
% \)
% The \emph{diffusion distance} between two nodes $x$ and $y$ at time $t$ is defined as:
% \[
% d_t(x, y) := \left( \sum_{z \in \mathcal{V}} \left( q_t(x, z) - q_t(y, z) \right)^2 \right)^{1/2}.
% \]
% \end{lemma}

\begin{proof}[Proof of Theorem~\ref{theorem:bi-grad-reduce}]
% We follow a similar approach to the proof of Theorem 2.3 in~\cite{beaini2021directional}. 
We are inspired by the proof strategy of Theorem 2.3 in~\cite{beaini2021directional}.
Let $\phi_1$ denote the first non-trivial eigenvector of $\mL$, where $\mL$ is the normalized Laplacian associated with $\mM$, associated with eigenvalue $\lambda_1$, and suppose $x, y, z \in \mathcal{V}$ satisfy $\phi_1(x) < \phi_1(z)$ and $\phi_1(y) < \phi_1(z)$. Let $(x’, y’)$ be the nodes obtained from $(x, y)$ by simultaneously taking a forward gradient step from $x$ to $z$ under $\mM_\gS$, and a backward gradient step from $y$ to $z$ under $\mM_\gD$. Define diffusion distance under $\mM$ as:
\[
d_k^\mM(x, y)^2 = \sum_{i \geq 1} e^{-2k \lambda_i} \left( \phi_i(x) - \phi_i(y) \right)^2,
\]
where ${ \phi_i }_{i=0}^{n-1}$ are the orthonormal eigenvectors of $\mL$. We aim to show:
\[
d_k^\mM(x’, y’)^2 < d_k^\mM(x, y)^2.
\]
The reduction in distance is primarily due to the contraction along $\phi_1$, since:
\(
(\phi_1(x’) - \phi_1(y’))^2 < (\phi_1(x) - \phi_1(y))^2,
\)
as both $x$ and $y$ move toward the higher-potential node $z$, effectively reducing their separation in the direction of $\phi_1$. As in~\cite{beaini2021directional}, we have bound:
\[
\sum_{i \geq 2} e^{-2k \lambda_i} \left[ (\phi_i(x’) - \phi_i(y’))^2 - (\phi_i(x) - \phi_i(y))^2 \right] \leq e^{-2k \lambda_2} \sum_{i \geq 2} \Delta_i,
\]
where each $\Delta_i$ denotes the change in squared difference on eigencomponent $i$. Since this term decays exponentially faster than $e^{-2k\lambda_1}$, there exists a threshold $C > 0$ such that for $k \geq C$, the overall distance decreases:
\[
d_k^\mM(x’, y’)^2 < d_k^\mM(x, y)^2.
\]
Thus, we can conclude that the anisotropic gradient step yields a contraction in diffusion distance.

\end{proof}

\section{Reproducibility}

% Our code is now available and can be accessed here: \url{https://anonymous.4open.science/r/agdn}.
% Our code is now available: \url{https://anonymous.4open.science/r/agdn}.

\subsection{Data Preprocessing}
We directly used the dataset from the UTSP repository, which is entirely consistent with the setup in previous work~\cite{fu2021generalize}. We first performed preprocessing by rescaling the node coordinates for each sample. We introduced a scale ratio $\epsilon$ and denoted the rescaled node coordinates as $\tilde{v}$. For nodes in a TSP instance, the rescaling is expressed as $\tilde{v} = \epsilon \cdot (v - \bar{v})$, where $\bar{v}$ denotes mean value of coordinates in a TSP instance, and $\epsilon = 2.0$ is set for TSP-100 and TSP-200 tasks, and  $\epsilon = 4.0$ is set for TSP-500 and TSP-1000 tasks.

\subsection{Experimental Settings}

\noindent\textbf{Supervised Learning.} Our settings in the supervised mode are entirely consistent with those described in the GatedGCN~\cite{joshi2019efficient}. Our training and inference processes are entirely consistent to eliminate any discrepancies. 

% Additionally, we also employed the exact same parameters during MCTS searching.

\noindent\textbf{Unupervised Learning.} Our settings in the unsupervised mode are fully consistent with those mentioned in the UTSP~\cite{min2024utsp}, including the MCTS parameters used, which are also identical to those described in the paper. In unsupervised learning, a surrogate loss is proposed to optimize the model, which requires the construction of a soft indicator matrix $\mathbb{T}$. As a result, our anisotropic design becomes unnecessary. Therefore, we removed this mechanism and ensured that the model’s output aligns with the requirements of the surrogate loss. In our model, since anisotropic information is no longer present, we only employ the transition matrix $\mT_{rw}$ to propagate neighbor information during graph diffusion. Additionally, due to the absence of information from two separate spaces, we also omitted the node aggregation process and directly used the node embeddings to predict the soft indicator matrix $\mathbb{T}$.

\noindent\textbf{Sparsification.} \label{appendix:sparse}
To reduce additional noise, we applied sparsification to the transition matrix. It is worth noting that the retained connections significantly exceed those in all current approaches. Specifically, for GatedGNN, it retains 20\% of the edges, whereas our approach retains 70\% of the edges.

\subsection{Implementation of AGDN}
% Model, Parameters, Training settings
\noindent\textbf{Architecture.} For our model, we used three layers, and in each layer we applied BatchNorm followed by Dropout to normalize the node embeddings. Next, we used the ReLU activation function, and a residual layer was incorporated to fuse node embeddings. Finally, The anisotropic node embeddings were concatenated and then mapped to the hidden dimension using a linear layer. We then use this fused embedding to predict heatmap.

\noindent\textbf{Parameters.} The design of our model parameters includes a hidden dimension of 128 and drop probability of 0.5. The diffusion step K is set to 5, and the weights of the PageRank-based diffusion matrix are initialized with a teleport probability of 0.1. 

\noindent\textbf{Training.} During training, we used the Adam optimizer to optimize our model. 
% And the learning rate is configured as $1 \times 10^{-4}$ for TSP-100 and TSP-200, and $5 \times 10^{-5}$ for TSP-500 and TSP-1000. 
The weight decay was set to 1.0 for all cases. Our model is trained for 20 epochs, and the final epoch is used for testing. Specifically, we applied StepLR to decay the learning rate, with a decay factor $\gamma = 0.8$ for every 8 epochs. In addition, we applied gradient clipping using \texttt{clip\_grad\_norm(1.0)} to prevent exploding gradients and stabilize training.
% particularly in the presence of large or unstable updates during optimization.

\subsection{Implementation of Baseline}
\noindent\textbf{GatedGCN.} We implemented GatedGCN entirely based on the code released in \cite{joshi2019efficient} repository. 
% All parameters were configured according to the best settings indicated in the original paper. 
We set the sparse ratio to 0.2, meaning each node remains connected to the closest 20\% of nodes. 
The encoder consisted of 3 layers, with a hidden dimension of 128. 
% The hyperparameters for the training process were consistent with ours. 
Then we standardized our model and GatedGCN model to use Monte Carlo Tree Search for searching the predicted paths. 

\noindent\textbf{UTSP.} For UTSP, we strictly follow the original experimental settings described in~\cite{min2024utsp}. 
% Specifically, we adopt a 2-layer encoding network with a hidden dimension of 64 for TSP-200 and TSP-500, and 128 for TSP-1000. 
The model is trained for 300 epochs with a batch size of 32 for TSP-200 and 64 for TSP-500 and TSP-1000. We use the Adam optimizer with a learning rate of $3 \times 10^{-3}$, weight decay of 0.0, and apply a StepLR scheduler with a step size of 20 and a decay factor $\gamma = 0.8$. 
% The regularization coefficient $C_1$ is set to 20 for TSP-200 and 10 for both TSP-500 and TSP-1000.

% \noindent\textbf{Others.} The results of other baselines in Table 1 are obtained directly from the UTSP~\cite{min2024utsp} and DIFUSCO~\cite{sun2023difusco}. For the baselines presented in Table 3, we strictly follow the results reported in the INViT~\cite{fang2024invit}.
\noindent\textbf{Others.} The results of other baselines in Table 1 are obtained directly from the official implementations and results reported in UTSP~\cite{min2024utsp} and DIFUSCO~\cite{sun2023difusco}. For the baselines presented in Table 3, we strictly follow the official results reported in INViT~\cite{fang2024invit}.
% \subsection{Solution Identification}
% \noindent\textbf{Heatmap Generation.}
% During the inference stage, we extracted the value at the second position of the model’s output heatmap logits as our heatmap. Subsequently, for each heatmap, we retained the top $k$ largest values in each row as input for MCTS.

% \noindent\textbf{MCTS Parameters.}
% The parameters for MCTS were fully configured according to the settings in \cite{min2024utsp}. Some key parameters, however, could be fine-tuned in future work. The $\text{Max Depth}$ parameter determines the maximum depth of $k$-opt moves during MCTS, balancing the complexity of moves and computation time. The $\text{Max Candidate Num}$ parameter restricts the size of the candidate set at each node, balancing search speed and solution quality. Furthermore, the $\text{Param H}$ parameter determines the number of simulations per move, enabling more comprehensive exploration at the expense of higher computational cost. Collectively, these parameters govern the trade-off between solution quality and computational efficiency within the algorithm.

\subsection{Computing Resources}
The training and inference of our model were conducted on an Nvidia RTX 6000 Ada GPU. The CPU used was an AMD EPYC 7763 64-Core @ 2.45 GHz, providing 128 threads. The server is configured with 1000 GB of DDR4 RAM. Monte Carlo Tree Search is performed on the CPU, utilizing 32 threads for evaluation.

% \noindent \textbf{GenAI Usage Disclosure.}
% We confirm that no GenAI tools were used in the creation of the code or data associated with this research. During the writing process, generative AI tools (such as ChatGPT) were used solely for grammar checking and language refinement. No content, analysis, or substantive ideas were generated by AI.

% \clearpage

\section{Supplementary Experiments}
\label{appendix:supplementary_exp}
To further demonstrate the effectiveness of our proposed method, we conduct a comprehensive set of supplementary experiments. In the following, we first analyze the robustness of AGDN, including the impact of hop depth selection, sparsification ratio choices, and different decoder strategies. In addition, we introduce a broader range of baseline methods to provide a more extensive and thorough comparison with our approach.

\subsection{Robustness Analysis: Sensitivity to Hop Depth}
\label{appendix:sup_exp:hop_depth}
We first investigate the sensitivity of AGDN to the choice of hop depth, which controls the range of neighborhood aggregation. We evaluate AGDN on TSP1000 with different hop depths while keeping all other settings fixed.

\begin{table}[h]
\centering
\caption{AGDN performance on TSP1000 under different hop settings.}
\label{tab:tsp1000_hops}
\begin{tabular}{lcc}
\toprule
Setting & Length & Gap (\%) \\
\midrule
hops = 15 & 23.37 & 1.049 \\
hops = 24 & 23.37 & 1.038 \\
hops = 30 & 23.37 & 1.024 \\
\bottomrule
\end{tabular}
\end{table}

As shown in the Table~\ref{tab:tsp1000_hops}, varying the hop depth from 15 to 30 results in negligible differences in both tour length and optimality gap. In particular, the tour length remains identical across all settings, and the gap varies by less than 0.03\%. These results indicate that AGDN is highly robust to the hop depth parameter, and its performance does not rely on a carefully tuned range of neighborhood hop depths.

\subsection{Robustness Analysis: Sensitivity to Sparsification Ratio}
\label{appendix:sup_exp:sp_ratio}
We further analyze the impact of the sparsification ratio ($sp$), which determines the proportion of retained edges during graph sparsification. We conduct experiments on TSP200, TSP500, and TSP1000, varying the sparsification ratio from 0.1 to 1.0. The results are reported in Table~\ref{tab:sp_gap}.

\begin{table}[tb]
\centering
\caption{Performance of AGDN under different $sp$ settings.}
\label{tab:sp_gap}
\begin{tabular}{cccc}
\toprule
$sp$ & TSP200 Gap (\%) & TSP500 Gap (\%) & TSP1000 Gap (\%) \\
\midrule
0.1 & 0.054 & 0.654 & 0.956 \\
0.3 & 0.045 & 0.631 & 1.021 \\
0.5 & 0.063 & 0.642 & 0.988 \\
0.7 & 0.060 & 0.674 & 1.049 \\
1.0 & 0.056 & 0.662 & 0.966 \\
\bottomrule
\end{tabular}
\end{table}

Across all problem scales, AGDN exhibits consistently stable performance under different sparsification ratios. The optimality gaps remain within a narrow range, and no clear degradation is observed even under aggressive sparsification. This suggests that AGDN does not depend on a specific sparsification level and can effectively leverage both sparse and dense graph structures.

\subsection{Robustness Analysis: Different Decoders}
\label{appendix:sup_exp:diff_decoder}
To examine whether the performance gains of AGDN depend on a specific decoding strategy, we further compare AGDN with a widely used graph-based baseline, GatedGCN, under different decoders. Specifically, we consider two commonly adopted local search decoders: 2-opt~\cite{englert2014worst} and Or-opt~\cite{babin2007improvements}, and evaluate all methods on TSP200, TSP500, and TSP1000.

\begin{table}[th]
\centering
\small
\setlength{\tabcolsep}{3pt}
\caption{Comparison of AGDN performance under different decoder strategies, along with comparisons to GatedGCN. The best results are highlighted in \emph{bold}, and the second-best results are \underline{underlined}.}
\label{tab:decoder_gap}
\begin{tabular}{lccc}
\toprule
Model (Decoder) & TSP200 Gap (\%) & TSP500 Gap (\%) & TSP1000 Gap (\%) \\
\midrule
GatedGCN (2-opt)  & $1.530 \pm 0.14$ & $3.553 \pm 0.20$ & $13.91 \pm 1.01$ \\
GatedGCN (Or-opt) & $1.217 \pm 0.11$ & $2.498 \pm 0.17$ & $11.50 \pm 0.87$ \\
AGDN (2-opt)     & $\underline{0.703 \pm 0.48}$ & $\underline{1.274 \pm 0.19}$ & $\underline{1.774 \pm 0.19}$ \\
AGDN (Or-opt)    & $\mathbf{0.445 \pm 0.04}$ & $\mathbf{1.117 \pm 0.06}$ & $\mathbf{1.491 \pm 0.01}$ \\
\bottomrule
\end{tabular}
\end{table}

\vspace{-0.5em}

As shown in the Table~\ref{tab:decoder_gap}, AGDN consistently outperforms GatedGCN across all problem scales and both decoding schemes. Notably, even when using the same decoder, AGDN achieves substantially lower optimality gaps, indicating that its advantage stems from the proposed graph representation and learning mechanism rather than a particular decoding heuristic.
Furthermore, both models benefit from the stronger Or-opt decoder compared to 2-opt, yet the relative performance gap between AGDN and GatedGCN remains large and stable. This observation confirms that the effectiveness of AGDN is decoder-agnostic and does not rely on a specific post-processing or local search strategy.

\subsection{Comparison with Broader Baselines}
\label{appendix:sup_exp:broader_baselines}

\begin{table}[th]
\centering
\small
\setlength{\tabcolsep}{3pt}
\caption{Performance comparison between AGDN and additional baseline methods, where AGDN (SL) denotes the supervised learning setting and AGDN (UL) denotes the unsupervised learning setting. OOM indicates out-of-memory errors. The best results are highlighted in \emph{bold}, and the second-best results are \underline{underlined}.}
\label{tab:broader_baselines}
\begin{tabular}{lccc}
\toprule
Model & TSP200 Gap (\%) & TSP500 Gap (\%) & TSP1000 Gap (\%) \\
\midrule

% AR
PointerFormer~\cite{jin2023pointerformer} & $1.45$ & $12.5$ & $20.7$ \\
BQNCO~\cite{drakulic2024bq} & $4.51$ & $4.02$ & $4.19$ \\
ELG~\cite{gao2023elg} & $1.64$ & $7.16$ & $11.4$ \\
HierTSP~\cite{goh2024hierarchical} & $2.23$ & $12.1$ & $22.8$ \\
% NAR
NAR4TSP~\cite{xiao2024nar4tsp} & $3.07$ & $12.0$ & $16.1$ \\
DEITSP~\cite{wang2025deitsp} & $0.40$ & $2.15$ & $3.68$ \\
% Diffusion
T2TCO~\cite{li2023t2t} & $0.55$ & $2.78$ & $OOM$ \\
% Neural
NeuroLKH~\cite{xin2021neurolkh}    & $0.53$ & $0.83$ & $\mathbf{0.90}$ \\
% Our
AGDN (SL)    & $\underline{0.06}$ & $\underline{0.74}$ & ${1.28}$ \\
AGDN (UL)    & $\mathbf{0.05}$ & $\mathbf{0.64}$ & $\underline{0.96}$ \\
\bottomrule
\end{tabular}
\end{table}

In this section, we further compare AGDN with a broader set of baseline methods. The results in Table~\ref{tab:broader_baselines} show that the proposed AGDN consistently demonstrates strong performance, achieving the best results on both TSP200 and TSP500. Notably, NeuroLKH, as a heuristic-based method, can spend significantly longer decoding time to obtain more accurate solutions. For example, on TSP1000, NeuroLKH requires 1183 seconds for decoding, whereas AGDN requires only 0.97 seconds while still achieving highly competitive and impressive performance.

\end{document}